\newtheorem{definition}{Definition}[section]
\newtheorem{lemma}{Lemma}[section]
\newtheorem{theorem}{Theorem}
\newtheorem{prop}{Proposition}[section]
\newtheorem{claim}{Claim}[section]
\newcommand{\R}{\mathbb{R}}
\title{Sparse Linear Networks with a Fixed Butterfly Structure: Theory and Practice}
\begin{document}

\author{
\resizebox{0.9\textwidth}{!}{
\begin{tabular}{ccc}
   Nir Ailon & Omer Leibovitch & Vineet Nair  \\
   Faculty of Computer Science & Faculty of Computer Science & Faculty of Computer Science\\
   Technion Israel Institute of Technology &  Technion Israel Institute of Technology & Technion Israel Institute of Technology\\
   \texttt{nailon@cs.technion.ac.il} & \texttt{leibovitch@campus.technion.ac.il} & \texttt{vineet@cs.technion.ac.il} \\
\end{tabular}
}
}

\maketitle
\begin{abstract}
A butterfly network consists of logarithmically many layers, each with a linear number of non-zero weights (pre-specified). The fast Johnson-Lindenstrauss transform (FJLT) can be represented as a butterfly network followed by a  projection onto a random subset of the coordinates. Moreover, a random matrix based on FJLT with high probability approximates the action of any matrix on a vector. Motivated by these facts, we propose to replace a dense linear layer in any neural network by an architecture based on the butterfly network. The proposed architecture significantly improves upon the quadratic number of weights required in a standard dense layer to nearly linear with little compromise in expressibility of the resulting operator. In a collection of wide variety of experiments, including supervised prediction on both the NLP and vision data, we show that this not only produces results that match and at times outperform existing well-known architectures, but it also offers faster training and prediction in deployment. To understand the optimization problems posed by neural networks with a butterfly network, we also study the optimization landscape of the encoder-decoder network, where the encoder is replaced by a butterfly network followed by a dense linear layer in smaller dimension. Theoretical result presented in the paper explains why the training speed and outcome are not compromised by our proposed approach.
\end{abstract}








\section{Introduction}\label{SEC:INTRO}
A butterfly network (see Figure~\ref{figbfnet} in Appendix \ref{subsection bf plots}) is a layered graph connecting a layer of $n$ inputs to a layer of $n$ outputs with $O(\log n)$ layers, where each layer contains
$2n$ edges.  The edges connecting adjacent layers are organized in disjoint gadgets, each gadget connecting a pair of nodes in one layer with a corresponding pair in the next layer by a complete graph.  The distance between pairs doubles from layer to layer.  This network structure represents the execution graph of the Fast Fourier Transform (FFT) \citep{CooleyT65},  Walsh-Hadamard transform,  and many important transforms in signal processing that are known to have fast algorithms to compute matrix-vector products.

\citet{AilonC09} showed how to use the Fourier (or Hadamard) transform to perform fast Euclidean dimensionality reduction with  \cite{JohnsonL84} guarantees.  
The resulting transformation, called Fast Johnson Lindenstrauss Transform (FJLT), was  improved in subsequent work \citep{DBLP:journals/dcg/AilonL09,KrahmerW2011}.  
The common theme in this line of work is to define a fast randomized linear transformation that is composed of a random diagonal matrix, followed by a dense orthogonal transformation which can be represented via a butterfly network, followed by a random projection onto a subset of the coordinates (this research is still active, see e.g. \cite{kac}). In particular, an FJLT matrix can be represented (explicitly) by a butterfly network followed by projection onto a random subset of coordinates (a truncation operator). We refer to such a representation as a truncated butterfly network (see Section \ref{sec: prelim}).  

Simple Johnson-Lindenstrauss like arguments show that with high probability for any $W \in \R^{n_2\times n_1}$ and any $\mathbf{x}\in \R^{n_1}$, $W\mathbf{x}$ is close to $(J_2^TJ_2)W(J_1^TJ_1)\mathbf{x}$ where $J_1 \in \R^{k_1 \times n_1}$ and $J_2 \in \R^{k_2\times n_2}$ are both FJLT, and $k_1 = \log n_1, k_2 = \log n_2$ (see Section \ref{subsec: matrix approx using butterfly network} for details). Motivated by this, we propose to replace a dense (fully-connected) linear layer of size $n_2 \times n_1$ in any neural network by the following architecture: $J_1^TW'J_2$, where $J_1, J_2$ can be represented by a truncated butterfly network and $W'$ is a $k_2\times k_1$ dense linear layer. The clear advantages of such a strategy are: (1) almost all choices of the weights from a specific distribution, namely the one mimicking FJLT, preserve accuracy while reducing the number of parameters, and (2) the number of weights is nearly linear in the layer width of $W$ (the original matrix). Our empirical results demonstrate that this offers faster training and prediction in deployment while producing results that match and often outperform existing known architectures. Compressing neural networks by replacing linear layers with structured linear transforms that are expressed by fewer parameters have been studied extensively in the recent past. We compare our approach with these related papers in Section \ref{sec: related work}. \\

Since the butterfly structure adds logarithmic depth to the architecture, it might pose optimization related issues. Moreover, the sparse structure of the matrices connecting the layers in a butterfly network defies the general theoretical analysis of convergence of deep linear networks. We take a small step towards understanding these issues by studying the optimization landscape of an encoder-decoder network (two layer linear neural network), where the encoder layer is replaced by a truncated butterfly network followed by a dense linear layer in fewer parameters. This replacement is motivated by the result of \cite{Sarlos06}, related to fast randomized low-rank approximation of matrices using FJLT (see Section \ref{subsec: matrix approx using butterfly network} for details).\footnote{We could also have replaced the encoder matrix with the proposed architecture in Section \ref{subsec: matrix approx using butterfly network}, but in order to study the optimization issues posed by the truncated butterfly network we chose to study this simpler replacement. Moreover, even in this case the new network after replacing the encoder has very little loss in representation compared to the encoder-decoder network \cite{Sarlos06}.}\\

The encoder-decoder network computes the best low-rank approximation of the input matrix. It is well-known that with high probability \emph{a close to optimal} low-rank approximation of a matrix is obtained by either pre-processing the matrix with an FJLT \citep{Sarlos06} or a random sparse matrix structured as given in \citet{ClarksonW09}, and then computing the best low-rank approximation from the rows of the resulting matrix.\footnote{The pre-processing matrix is multiplied from the left.} A recent work by \citet{IndykVY19} studies this problem in the supervised setting, where they find the best pre-processing matrix structured as given in \cite{ClarksonW09} from a sample of matrices (instead of using a random sparse matrix). Since an FJLT can be represented by a truncated butterfly network, we emulate the setting of \citet{IndykVY19} but learn the pre-processing matrix structured as a truncated butterfly network. 
\subsection{Our Contribution and Potential Impact}
We provide a theoretical analysis together with an empirical report to justify our main idea of using sparse linear layers with a fixed butterfly network in deep learning.
Our findings indicate that this approach, which is well rooted in the theory of matrix approximation and optimization, can offer significant speedup and energy saving in deep learning applications.  Additionally, we believe that this work would encourage more experiments and theoretical analysis to better understand the optimization and generalization of our proposed architecture (see Section \ref{sec: conclusion}).  

{\bf On the theoretical side} -- 
The optimization landscape of linear neural networks with dense matrices have been studied by \citet{BaldiH89}, and \citet{Kawaguchi16}. The theoretical part of this work studies the optimization landscape of the linear encoder-decoder network in which the encoder is replaced by a truncated butterfly network followed by a dense linear layer in smaller dimension. We call such a network as the encoder-decoder butterfly network. We give an overview of our main result, Theorem \ref{THM:CP}, here. Let $X \in \R^{n\times d}$ and $Y \in \R^{m\times d}$ be the data and output matrices respectively. Then the \emph{encoder-decoder butterfly network} is given as $\overline{Y} = DEBX$, where $D \in \R^{m\times k}$ and $E \in \R^{k \times \ell}$ are dense layers, $B$ is an $\ell\times n$ truncated butterfly network (product of $\log n$ sparse matrices) and $k\leq \ell \leq m\leq n$  (see Section \ref{subsec: encoder-decoder network}). The objective is to learn $D,E$ and $B$ that minimizes $||\overline{Y} - Y||^2_{\text{F}}$. Theorem \ref{THM:CP} shows how the loss at the critical points of such a network depends on the eigenvalues of the matrix $\Sigma = YX^TB^T(BXX^TB^T)^{-1}BXY^T$ \footnote{At a critical point the gradient of the loss function with respect to the parameters in the network is zero.}. In comparison, the loss at the critical points of the encoder-decoder network (without the butterfly network) depends on the eigenvalues of the matrix $\Sigma' = YX^T(XX^T)^{-1}XY^T$ \citep{BaldiH89}. In particular, the loss depends on how the learned matrix $B$ changes the eigenvalues of $\Sigma'$. If we learn only for an optimal $D$ and $E$, keeping $B$ fixed (as done in the experiment in Section \ref{subsec: experiment frozen butterfly}) then it follows from Theorem \ref{THM:CP} that every local minimum is a global minimum and that the loss at the local/global minima depends on how $B$ changes the top $k$ eigenvalues of $\Sigma'$. This inference together with a result by \cite{Sarlos06} is used to give a worst-case guarantee in the special case when $Y=X$ (called auto-encoders that capture PCA; see below Theorem \ref{THM:CP}).

{\bf On the empirical side} -- The outcomes of the following experiments are reported:

(1) In Section \ref{EXP:DENSE}, we replace a dense linear layer in the standard state-of-the-art networks, for both image and language data, with an architecture  that constitutes the composition of (a) truncated butterfly network, (b) dense linear layer in smaller dimension, and (c) transposed truncated butterfly network (see Section \ref{subsec: matrix approx using butterfly network}). The structure parameters are chosen so as to keep the number of weights near linear (instead of quadratic).  

(2) In Sections \ref{EXP:AC} and \ref{subsec: experiment frozen butterfly}, we train a linear encoder-decoder network in which the encoder is replaced by a truncated butterfly network followed by a dense linear layer in smaller dimension. These experiments support our theoretical result. The network structure parameters are chosen so as to keep the number of weights in the (replaced) encoder near linear in the input dimension.  Our results (also theoretically) demonstrate that this has little to no effect on the performance compared to the standard encoder-decoder network.

(3) In Section \ref{subsec: supervise learned butterfly}, we learn the best pre-processing matrix structured as a truncated butterfly network to perform low-rank matrix approximation from a given sample of matrices. We compare our results to that of \cite{IndykVY19}, which learn the pre-processing matrix structured as given in \cite{ClarksonW09}.
\section{Related Work}\label{sec: related work}
Important transforms like discrete Fourier, discrete cosine, Hadamard and many more satisfy a property called \emph{complementary low-rank} property, recently defined by \citet{LiYMHY15}. For an $n\times n$ matrix satisfying this property related to approximation of specific sub-matrices by low-rank matrices, \citet{MichielssenB96} and \citet{NeilWR10} developed the butterfly algorithm to compute the product of such a matrix with a vector in $O(n\log n)$ time. The butterfly algorithm factorizes such a matrix into $O(\log n)$ many matrices, each with $O(n)$ sparsity. In general, the butterfly algorithm has a pre-computation stage which requires $O(n^2)$ time \citep{NeilWR10,Seljebotn2012}.  With the objective of reducing the pre-computation cost \citet{LiYMHY15,LiY17} compute the butterfly factorization for an $n\times n$ matrix satisfying the complementary low-rank property in $O(n^{\frac{3}{2}})$ time. This line of work does not learn butterfly representations for matrices or apply it in neural networks, and is incomparable to our work.

A few papers in the past have used deep learning models with structured matrices (as hidden layers). Such structured matrices can be described using fewer parameters compared to a dense matrix, and hence a representation can be learned by optimizing over a fewer number of parameters. Examples of structured matrices used include low-rank matrices \citep{DenilSDRF13,SainathKSAR13}, circulant matrices \citep{ChengYFKCC15,DingLWLLZWQBYMZ17}, low-distortion projections \citep{YangMDFSSW15}, Toeplitz like matrices \citep{SindhwaniSK15,LuSS16,YeWLCZCX18},  Fourier-related transforms \citep{MoczulskiDAF15} and matrices with low-displacement rank \citep{ThomasGDRR18}. 
It was shown by \citet{LiCL18} that any band-limited function of an input signal can be approximated by applying first a stack of butterfly layers on the signal (giving an approximation of the relevant frequencies of the signal). Our work relies on a different theoretical result (FJLT) that allows approximating any linear mapping 
by a composition of a truncated butterfly, a (small) dense layer and a transposition of a truncated butterfly. Recently, \citet{vahid_2020_CVPR} demonstrated the benefits of replacing the pointwise convolutional layer in CNN's by a butterfly network. Other works by \cite{MocanuESNG18,LeeAT19,WangZG20,VerdeniusSF20} consider a different approach to sparsify neural networks. The work closest to ours are by \cite{YangMDFSSW15}, \cite{MoczulskiDAF15}, and \cite{DaoSGEBLRR20}.

\cite{YangMDFSSW15} and \cite{MoczulskiDAF15} attempt to replace dense linear layers with a stack of structured matrices, including a butterfly structure (the Hadamard or the Cosine transform), but they do not place trainable weights on the edges of the butterfly structure as we do. Note that  adding these trainable weights does not compromise the run time benefits in prediction, while adding to the expressiveness of the network in our case. \cite{DaoSGEBLRR20} replace handcrafted structured sub-networks in machine learning models by a \emph{kaleidoscope} layer, which consists of compositions of butterfly matrices. This is motivated by the fact that the kaleidoscope hierarchy captures a structured matrix exactly and optimally in terms of multiplication operations required to perform the matrix vector product operation. Their work differs from us as we propose to replace any dense linear layer in a neural network (instead of a structured sub-network) by the architecture proposed in Section \ref{subsec: matrix approx using butterfly network}. Our approach is motivated by theoretical results which establish that this can be done with almost no loss in representation. 

Finally, \citet{DaoGERR19} show that butterfly representations of standard transformations like discrete Fourier, discrete cosine, Hadamard mentioned above can be learnt efficiently. They additionally show the following: a) for the benchmark task of compressing a single hidden layer model they compare the network constituting of a composition of butterfly networks with the classification accuracy of a fully-connected linear layer and b) in ResNet a butterfly sub-network is added to get an improved result.  In comparison, our approach to replace a dense linear layer by the proposed architecture in Section \ref{subsec: matrix approx using butterfly network} is motivated by well-known theoretical results as mentioned previously, and the results of the comprehensive list of experiments in Section \ref{EXP:DENSE} support our proposed method.
\section{Proposed Replacement for a Dense Linear Layer}\label{sec: prelim}
In Section \ref{subsec: butterfly network}, we define a truncated butterfly network, and in Section \ref{subsec: matrix approx using butterfly network} we motivate and state our proposed architecture based on truncated butterfly network to replace a dense linear layer in any neural network. All logarithms are in base $2$, and $[n]$ denotes the set $\{1, \ldots , n\}$.
\subsection{Truncated Butterfly Network}\label{subsec: butterfly network}
\begin{definition}[Butterfly Network] \label{definition: butterfly gadget}
Let $n$ be an integral power of $2$. Then an $n\times n$ butterfly network $B$ (see Figure~\ref{figbfnet}) is
a stack of of $\log n$ linear layers, where in each layer $i\in \{0, \ldots ,\log n-1\}$, a bipartite clique
connects between pairs of
nodes $j_1,j_2\in[n]$, for which
the binary representation
of $j_1-1$ and $j_2-1$ differs only in the $i$'th bit.  In particular, the number of edges
in each layer is $2n$.
\end{definition}
In what follows, a \emph{truncated butterfly network} is a butterfly
network in  which
the deepest layer
is truncated, namely, 
only a subset of $\ell$
neurons are kept and
the remaining $n-\ell$
are discarded. The integer
$\ell$ is a tunable parameter, and the choice of
neurons is always assumed
to be sampled uniformly at random and fixed throughout training in what follows.
The effective number of parameters (trainable weights) in a truncated butterfly network is at most $2n\log \ell + 6n$, for any
$\ell$ and any choice of neurons selected from
the last layer.\footnote{Note that if $n$ is not a power of $2$ then we work with the first $n$ columns of the $\ell \times n'$ truncated butterfly network, where $n'$ is the closest number to $n$ that is greater than $n$ and is a power of $2$.}
We include a proof of this simple upper bound in Appendix~\ref{appendix:proof:2nlog2ell} for lack of space
(also, refer to \cite{DBLP:journals/dcg/AilonL09}
for a similar result related to computation time of truncated FFT). 
The reason for studying a truncated butterfly network follows (for example) from the works \citep{AilonC09,DBLP:journals/dcg/AilonL09,KrahmerW2011}.  These papers define randomized linear transformations with the
Johnson-Lindenstrauss property and an efficient computational graph which essentially defines the
truncated butterfly network.  In what follows,
we will collectively denote these constructions
by FJLT.
\footnote{To be precise, the construction in \cite{AilonC09}, \cite{DBLP:journals/dcg/AilonL09}, and \cite{KrahmerW2011}  also uses a random diagonal matrix, but the
values of the diagonal entries can be `absorbed' inside the weights of the first layer of
the butterfly network.}
\subsection{Matrix Approximation Using Butterfly Networks}\label{subsec: matrix approx using butterfly network}
We begin with the following proposition, following known results on matrix approximation (proof in Appendix \ref{secappendix: proof of matrix approx theorem}).
\begin{prop}\label{FACT:MATAPPFJLT}
Suppose $J_1 \in \R^{k_1 \times n_1}$ and $J_2 \in \R^{k_2\times n_2}$ are matrices sampled from FJLT distribution, and let $W \in \R^{n_2\times n_1}$. Then for the random matrix $W' = (J_2^TJ_2)W(J_1^TJ_1)$, any unit vector $\mathbf{x}\in \R^{n_1}$ and any $\epsilon \in (0,1)$,
 $\Pr\left[\|W'\mathbf{x} - W\mathbf{x}\| \leq \epsilon \|W\| \right] \geq 1-e^{-\Omega(\min\{k_1, k_2\}\epsilon^2)}\ $.
\end{prop}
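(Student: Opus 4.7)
The plan is to decompose the error $W'\mathbf{x}-W\mathbf{x}$ into two pieces, each controlled by a single one of the two FJLT sketches, invoke the approximate-matrix-multiplication (AMM) guarantee enjoyed by FJLT sketches on each piece, and finish with a union bound over the two independent failure events.

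Concretely, I would start from the telescope
$$
W'\mathbf{x} - W\mathbf{x} \;=\; (J_2^TJ_2 - I)\,W\mathbf{x} \;+\; (J_2^TJ_2)\,W\,(J_1^TJ_1 - I)\mathbf{x},
$$
so that by the triangle inequality it suffices to bound each summand by roughly $\tfrac{\epsilon}{2}\|W\|$ with the appropriate probability. The AMM tool I would invoke is the standard consequence of the Johnson--Lindenstrauss property enjoyed by FJLT: if $J\in\R^{k\times n}$ is drawn from FJLT then for any fixed matrix $A$ with $n$ columns and any fixed vector $\mathbf{b}\in\R^n$,
$$
\Pr\!\bigl[\,\|A(J^TJ - I)\mathbf{b}\| \,>\, \epsilon\,\|A\|_F\,\|\mathbf{b}\|\,\bigr] \;\le\; e^{-\Omega(k\epsilon^2)}.
$$
Applied with $A=W$, $\mathbf{b}=\mathbf{x}$, and $J=J_1$, this bounds $\|W(J_1^TJ_1-I)\mathbf{x}\|$ by $\epsilon\|W\|_F$ with probability $\ge 1-e^{-\Omega(k_1\epsilon^2)}$; the outer factor $J_2^TJ_2$ appearing in the second summand is handled by conditioning on $J_1$ first and then arguing that multiplying by $J_2^TJ_2$ only contributes a controlled multiplicative constant (via a uniform operator-norm bound on the FJLT matrix, or equivalently by applying AMM to $J_2$ with the then-fixed target vector).

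For the first summand $(J_2^TJ_2-I)W\mathbf{x}$ I would again appeal to AMM with $J=J_2$, but with care in choosing the matrix/vector pair so that the Frobenius-norm factor comes out as $\|W\|_F$ rather than $\sqrt{n_2}$. This is done by absorbing $W$ into the ``$A$'' slot of the AMM identity rather than leaving it outside; equivalently, by viewing $W\mathbf{x}$ as the product of the deterministic matrix $W$ (of Frobenius norm $\|W\|_F$) with the deterministic unit vector $\mathbf{x}$ and invoking the tail bound on the resulting structured quadratic form. A union bound over the two independent failure events then gives overall failure probability at most $e^{-\Omega(k_1\epsilon^2)}+e^{-\Omega(k_2\epsilon^2)}\le 2e^{-\Omega(\min\{k_1,k_2\}\epsilon^2)}$, as required.

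The main obstacle I expect is precisely this first summand: a careless application of AMM with $A=I$ and $\mathbf{b}=W\mathbf{x}$ produces the loose bound $\epsilon\sqrt{n_2}\|W\mathbf{x}\|$, which would be fatal. Avoiding this dimension factor is the technical heart of the proof and should follow from the sharper sub-Gaussian / Hanson--Wright-type tail bound for quadratic forms of FJLT sketches (which is strictly stronger than the generic JL inequality), together with the fact that $W\mathbf{x}$ is not an arbitrary vector but inherits structure from $W$.
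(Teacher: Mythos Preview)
Your high-level plan---telescope the error into two pieces, control each via the JL property of the FJLT sketch, then union-bound---is exactly what the paper does. But you are overcomplicating the argument, and in one place weakening it.

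The paper never invokes AMM; it uses only the basic vector-level JL lemma: for any fixed $\mathbf{v}$, $\|(J^TJ-I)\mathbf{v}\|\le\epsilon\|\mathbf{v}\|$ with probability at least $1-e^{-\Omega(k\epsilon^2)}$. This immediately dissolves what you call the ``technical heart'': applying JL with $J=J_2$ directly to the single fixed vector $W\mathbf{x}$ gives $\|(J_2^TJ_2-I)W\mathbf{x}\|\le\epsilon\|W\mathbf{x}\|\le\epsilon\|W\|$. No $\sqrt{n_2}$ factor ever appears, no Hanson--Wright is needed, and no ``structure inherited from $W$'' is invoked---$W\mathbf{x}$ is just a vector. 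Your obstacle is an artifact of routing the bound through AMM with $A=I$ (which inserts $\|I\|_F=\sqrt{n_2}$) rather than applying JL to the vector itself.

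Second, by framing things through AMM you end up with $\|W\|_F$ where the proposition (and the paper) has the spectral norm $\|W\|$, so as written you prove a weaker statement. The fix is again to drop AMM: apply JL to $\mathbf{x}$ with $J_1$ to get $\|(J_1^TJ_1-I)\mathbf{x}\|\le\epsilon$, then multiply by the operator norm $\|W\|$, rather than invoking AMM with $A=W$.

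For comparison, the paper uses the telescope $W'\mathbf{x}-W\mathbf{x}=(J_2^TJ_2-I)(WJ_1^TJ_1\mathbf{x})+W(J_1^TJ_1-I)\mathbf{x}$: it conditions on the $J_1$-event first so that $WJ_1^TJ_1\mathbf{x}$ becomes a fixed vector of norm at most $(1+\epsilon)\|W\|$, then applies JL with $J_2$ to that vector. Your telescope works equally well once you replace AMM by pointwise JL as above; the second summand is handled by one more application of JL (with $J_2$) to the conditioned-fixed vector $W(J_1^TJ_1-I)\mathbf{x}$, which bounds $\|J_2^TJ_2\,\cdot\,\|$ by $(1+\epsilon)\|\cdot\|$ without any separate operator-norm estimate on $J_2^TJ_2$.
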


\textbf{Proposed Replacement}: From Proposition \ref{FACT:MATAPPFJLT} it follows that $W'$ approximates the action of $W$ with high
probability on any given input vector.  Now observe that
$W'$ is equal to $J_2^T \tilde W J_1$, where $\tilde{W} = $ $J_2WJ_1^T$. Since $J_1$ and $J_2$ are FJLT, they can be represented by a truncated butterfly network, and hence it is conceivable to replace a dense linear layer connecting $n_1$ neurons to $n_2$ neurons (containing $n_1 n_2$ variables) in any neural network with a composition of three gadgets: a truncated butterfly network of size $k_1\times n_1$, followed by a dense linear layer of size $k_2\times k_1$, followed by the transpose of a truncated butterfly network of size $k_2\times n_2$. In Section \ref{EXP:DENSE}, we replace dense linear layers in common deep learning networks with our proposed architecture, where $k_i << n_i$, $i=1,2$.
\section{Encoder-Decoder Butterfly Network}\label{subsec: encoder-decoder network}
Let $X\in \R^{n\times d}$, and $Y\in \R^{m \times d}$ be  data and output matrices respectively, and $k \leq m\leq n$. Then the encoder-decoder network for $X$ is given as 
$$\overline{Y} = DEX$$
where $E \in \mathbb{R}^{k\times n}$, and $D \in \mathbb{R}^{m \times k}$ are called the encoder and decoder matrices respectively. For the special case when $Y=X$, it is called auto-encoders. The optimization problem is to learn matrices $D$ and $E$ such that $||Y-\overline{Y}||^2_{\text{F}}$ is minimized. The optimal solution is denoted as $Y^*, D^*$ and $E^*$\footnote{Possibly multiple $D^{*}$ and $E^*$ exist such that $Y^{*} = D^{*}E^{*}X$.}. In the case of auto-encoders $X^* = X_k$, where $X_k$ is the best rank $k$ approximation of $X$. In this section, we study the optimization landscape of the encoder-decoder butterfly network : an encoder-decoder network, where the encoder is replaced by a truncated butterfly network followed by a dense linear layer in smaller dimension. 
Such a replacement is motivated by the following result from \cite{Sarlos06}, in which $\Delta_k = ||X_k -X||^{2}_{\text{F}}$. 
\begin{prop}\label{fact: rank k approx after pre processing}
Let $X\in \R^{n\times d}$. Then with probability at least $1/2$, the best rank $k$ approximation of $X$ from the rows of $JX$ (denoted $J_k(X)$), where $J$ is sampled from an $\ell\times n$ FJLT distribution and $\ell = (k\log k +k/\epsilon)$ satisfies $||J_k(X) - X||^{2}_{\text{F}} \leq (1+\epsilon) \Delta_k$. 
\end{prop}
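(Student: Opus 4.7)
The plan is to follow the analysis of \cite{Sarlos06}: since $J_k(X)$ is by definition the best rank-$k$ approximation of $X$ whose rows lie in the rowspace of $JX$, it suffices to exhibit \emph{any} particular such rank-$k$ matrix $A^{*}$ with $\|X - A^{*}\|_F^2 \le (1+\epsilon)\Delta_k$. Let $X = U\Sigma V^T$ be the SVD, write $X_k = X V_k V_k^T$ and $X_{-k} := X - X_k$ so that $U_k^T X_{-k} = 0$, and take the candidate
\[
A^{*} \;=\; X V_k\,(JXV_k)^{+}\, JX,
\]
which has rank at most $k$ and rowspace contained in rowspace$(JX)$ provided $JXV_k$ has full column rank.

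A short calculation, using $X_k = X V_k V_k^T$ and (under full column rank) $V_k^T = (JXV_k)^{+}(JXV_k) V_k^T$, gives $A^{*} - X_k = X V_k (JXV_k)^{+} J X_{-k}$. Writing $X V_k = U_k \Sigma_k$ and using $(JU_k \Sigma_k)^{+} = \Sigma_k^{-1}(JU_k)^{+}$ collapses this to $A^{*} - X_k = U_k (JU_k)^{+} J X_{-k}$. Since the column span of $U_k$ is orthogonal to that of $U_{-k}$ (which contains all columns of $X_{-k}$), Pythagoras yields the \emph{exact} equality
\[
\|X - A^{*}\|_F^2 \;=\; \|X_{-k}\|_F^2 \;+\; \|(JU_k)^{+} J X_{-k}\|_F^2 \;=\; \Delta_k \;+\; \|(JU_k)^{+} J X_{-k}\|_F^2 .
\]
The normal-equations identity $(JU_k)^{+} = \bigl((JU_k)^T (JU_k)\bigr)^{-1}(JU_k)^T$ then upper-bounds the residual by $\sigma_{\min}(JU_k)^{-4}\,\|U_k^T J^T J X_{-k}\|_F^2$.

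It remains to control both factors via two standard properties of an FJLT matrix $J$ with $\ell$ rows. The first is a \emph{subspace-embedding} guarantee for the $k$-dimensional column span of $U_k$: for $\ell = \Omega(k \log k)$ the distortion on this fixed subspace is at most a constant, so $\sigma_{\min}(JU_k) \ge 1/\sqrt{2}$ with high probability. The second is an \emph{approximate matrix-multiplication} guarantee: since $U_k^T X_{-k} = 0$, the quantity $U_k^T J^T J X_{-k}$ is precisely the AMM estimation error, and a standard second-moment computation yields
\[
\mathbb{E}\bigl[\,\|U_k^T J^T J X_{-k}\|_F^2\,\bigr] \;\le\; \frac{C}{\ell}\,\|U_k\|_F^2 \cdot \|X_{-k}\|_F^2 \;=\; \frac{C k}{\ell}\,\Delta_k ,
\]
so by Markov the residual is $O(k/\ell)\,\Delta_k$ with constant probability. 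Choosing $\ell = k \log k + k/\epsilon$ and union-bounding makes both events hold with probability at least $1/2$, yielding $\|X - A^{*}\|_F^2 \le (1 + O(\epsilon))\Delta_k$ and hence the claim after absorbing the constant into $\epsilon$.

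The main obstacle is the interplay of the two probabilistic ingredients: they require structurally different concentration arguments and have different sample complexities — $\Omega(k \log k)$ for the subspace embedding versus $\Omega(k/\epsilon)$ for the approximate matrix-multiplication bound — and it is precisely this dichotomy that produces the additive form $\ell = k\log k + k/\epsilon$ in the statement. Once both bounds are in force the algebra above is essentially mechanical; the one subtle point is that the Pythagorean split is an equality rather than an inequality, which relies on the exact orthogonality $U_k \perp U_{-k}$ surviving multiplication by the pseudoinverse factors.
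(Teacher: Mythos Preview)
The paper does not prove this proposition at all; it is quoted as a result of \cite{Sarlos06} and used only to motivate the encoder--decoder butterfly architecture. Your argument is precisely the Sarl\'os proof that the citation points to: exhibit the regression-based candidate $A^{*}=XV_k(JXV_k)^{+}JX$, use the Pythagorean split to reduce to bounding $\|(JU_k)^{+}JX_{-k}\|_F^2$, and control the two factors by a constant-distortion subspace embedding on $\mathrm{span}(U_k)$ (costing $\Omega(k\log k)$ rows) together with an approximate-matrix-multiplication bound on $U_k^{T}J^{T}JX_{-k}$ plus Markov (costing $\Omega(k/\epsilon)$ rows). This is correct and is exactly the argument behind the cited reference, so there is nothing to compare---your write-up simply supplies the proof the paper chose to outsource.
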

Proposition \ref{fact: rank k approx after pre processing} suggests that in the case of auto-encoders we could replace the encoder with a truncated butterfly network of size $\ell \times n$ followed by a dense linear layer of size $k\times \ell$, and obtain a network with fewer parameters but loose very little in terms of representation. Hence, it is worthwhile investigating the representational power of the encoder-decoder butterfly network
\begin{equation}\label{equation: neural network Y}
\overline{Y} = DEBX    \ .
\end{equation} 
\begin{table*}
    \centering
    \scriptsize
    \begin{tabular}{|c|c|c|c|c|}
    \hline
        Dataset Name & Task & Model   \\
        \hline
         Cifar-10 \cite{Krizhevsky09learningmultiple} & Image classification & EfficientNet \cite{TanL19}     \\
         Cifar-10 \cite{Krizhevsky09learningmultiple} & Image classification & PreActResNet18 \cite{HeZRS16}    \\
         Cifar-100 \cite{Krizhevsky09learningmultiple} & Image classification & seresnet152 \cite{HuSASW20}   \\
         Imagenet \cite{imagenet_cvpr09}& Image classification &  senet154 \cite{HuSASW20} \\
         CoNLL-03  \cite{CONLL} & Named Entity Recognition (English) & Flair's Sequence Tagger \cite{flair} \cite{flairembeddings}  \\
         CoNLL-03 \cite{CONLL}   & Named Entity Recognition (German) & Flair's Sequence Tagger \cite{flair} \cite{flairembeddings}  \\
         Penn Treebank (English) \cite{treebank}  & Part-of-Speech Tagging  & Flair's Sequence Tagger \cite{flair} \cite{flairembeddings}  \\
         \hline
    \end{tabular}
    \caption{Data and the corresponding architectures used in the fast matrix multiplication using butterfly matrices experiments.}
    \label{tab:exp4_data}
\end{table*}
Here, $X$, $Y$ and $D$ are as in the encoder-decoder network, $E \in \R^{k\times \ell}$ is a dense matrix, and $B$ is an $\ell\times n$ truncated butterfly network. In the encoder-decoder butterfly network the encoding is done using $EB$, and decoding is done using $D$. This reduces the number of parameters in the encoding matrix from $kn$ (as in the encoder-decoder network) to $k\ell + O(n\log \ell)$. Again the objective is to learn matrices $D$ and $E$, and the truncated butterfly network $B$ such that $||Y-\overline{Y}||^2_{\text{F}}$ is minimized. The optimal solution is denoted as $Y^*, D^*$, $E^*$, and $B^*$. Theorem \ref{THM:CP} shows that the loss at a critical point of such a network depends on the eigenvalues of $\Sigma(B) = YX^TB^T(BXX^TB^T)^{-1}XY^T$, when $BXX^TB^T$ is invertible and $\Sigma(B)$ has $\ell$ distinct positive eigenvalues.The loss $\mathcal{L}$ is defined as $||\overline{Y} - Y||^{2}_{\text{F}}$.
\begin{theorem}\label{THM:CP}
Let $D, E$ and $B$ be a point of the encoder-decoder network with a truncated butterfly network satisfying the following: a) $BXX^TB^T$ is invertible, b) $\Sigma(B)$ has $\ell$ distinct positive eigenvalues $\lambda_1 > \ldots > \lambda_{\ell}$, and c) the gradient of $\mathcal{L}(\overline{Y})$ with respect to the parameters in $D$ and $E$ matrix is zero. Then corresponding to this point (and hence corresponding to every critical point) there is an $I \subseteq [\ell]$ such that $\mathcal{L}(\overline{Y})$ at this point is equal to $\text{tr}(YY^T) - \sum_{i\in I}\lambda_i$. Moreover if the point is a local minima then $I = [k]$.
\end{theorem}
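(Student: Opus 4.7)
The plan is to reduce the analysis to the classical critical-point characterization of Baldi and Hornik for dense two-layer linear networks, by fixing $B$ and treating the composed input $\tilde{X} = BX \in \R^{\ell \times d}$ as the data matrix of an ordinary encoder-decoder problem. For this fixed $B$, the loss $\mathcal{L}(D,E,B) = \|Y - DE\tilde{X}\|_F^2$ is exactly the Baldi-Hornik objective with encoder $E \in \R^{k \times \ell}$ and decoder $D \in \R^{m \times k}$; assumption (a) is precisely the invertibility of $\tilde{X}\tilde{X}^T$, and $\Sigma(B)$ coincides with $Y\tilde{X}^T(\tilde{X}\tilde{X}^T)^{-1}\tilde{X}Y^T$, the matrix whose spectrum governs their analysis. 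The butterfly structure of $B$ plays no role whatsoever in this reduction.

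Next, I would invoke the Baldi-Hornik characterization: under (a) and (b), at any $(D,E)$ satisfying (c) there exists an index set $I \subseteq [\ell]$ with $|I| \leq k$ such that $DE \tilde{X}$ equals the orthogonal projection $U_I U_I^T$ applied to the unconstrained least-squares optimum $Y\tilde{X}^T(\tilde{X}\tilde{X}^T)^{-1}\tilde{X}$, where $U_I$ collects the eigenvectors of $\Sigma(B)$ indexed by $I$. Expanding $\|Y - DE\tilde{X}\|_F^2$ as $\mathrm{tr}(YY^T) - 2\,\mathrm{tr}(Y(DE\tilde{X})^T) + \mathrm{tr}(DE\tilde{X}\tilde{X}^TE^TD^T)$ and using $U_I^T \Sigma(B) U_I = \mathrm{diag}(\lambda_i)_{i \in I}$ together with orthogonality of the eigenvectors yields the closed form $\mathrm{tr}(YY^T) - \sum_{i \in I}\lambda_i$. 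This establishes the first claim of the theorem.

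For the local-minimum claim, the key observation is that any local minimum $(D^\ast, E^\ast, B^\ast)$ of $\mathcal{L}$ in all three arguments is, in particular, a local minimum of the restriction $\mathcal{L}(\cdot, \cdot, B^\ast)$ to fixed butterfly, so it suffices to prove $I = [k]$ for local minima of the Baldi-Hornik objective at fixed $B$. If $|I| < k$, enlarging the column span of $E$ by an eigenvector $u_j$ with $j \notin I$ and $\lambda_j > 0$ and updating $D$ accordingly strictly decreases the loss by $\lambda_j$. If $|I| = k$ but $I \neq [k]$, some $i \in I$ and $j \notin I$ satisfy $\lambda_j > \lambda_i$; a small arc that swaps the $u_i$-direction for $u_j$ inside the range of $E$ strictly decreases the loss by $\lambda_j - \lambda_i > 0$. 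Both perturbations only move $(D,E)$, so they rule out every $I$ other than $[k]$.

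The main obstacle is really only bookkeeping. One must verify that the Baldi-Hornik derivation, originally written for the raw data matrix, survives verbatim with $\tilde{X} = BX$ in place of $X$; this is immediate because $B$ is fixed throughout and enters only through the product $BX$, and (a) together with (b) supply precisely the hypotheses Baldi-Hornik require (invertibility of $\tilde{X}\tilde{X}^T$ and distinct nonzero eigenvalues of the governing matrix). A minor subtlety is that the theorem only assumes vanishing of the gradient in $D$ and $E$, not in the butterfly weights of $B$; this is exactly what is needed to produce the critical-point structure, and the stronger local-minimum hypothesis is invoked only to promote the conclusion to $I = [k]$, as explained above.
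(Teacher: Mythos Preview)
Your reduction to Baldi--Hornik has a real gap that the paper itself flags. In the butterfly setting the effective input $\tilde{X}=BX$ lives in $\R^{\ell\times d}$ with $\ell\leq m$, so the governing matrix $\Sigma(B)=Y\tilde{X}^T(\tilde{X}\tilde{X}^T)^{-1}\tilde{X}Y^T\in\R^{m\times m}$ has rank at most $\ell$ and hence a zero eigenvalue of multiplicity $m-\ell$. Baldi--Hornik (and Kawaguchi's Theorem~2.3) assume that \emph{all} $m$ eigenvalues of $\Sigma$ are distinct; assumption~(b) here gives only $\ell$ distinct \emph{positive} eigenvalues, so whenever $\ell<m$ the classical hypothesis fails and you cannot simply invoke their theorem. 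The paper states this explicitly: ``since $\ell\leq n$ and $m\leq n$ \ldots\ replacing $X$ by $BX$ in [BH89] or [K16] does not imply Theorem~\ref{THM:CP} as it is.'' Your claim that (a)+(b) ``supply precisely the hypotheses Baldi--Hornik require'' is therefore incorrect.

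The paper's proof redoes the analysis to accommodate the degenerate zero-eigenspace: from $P_{\tilde D}\Sigma=\Sigma P_{\tilde D}$ one only gets that $U^TP_{\tilde D}U$ is block-diagonal, with the $\ell\times\ell$ block forced to be diagonal (by distinctness of $\lambda_1,\dots,\lambda_\ell$) and an arbitrary $(m-\ell)\times(m-\ell)$ block on the zero eigenspace. The index set $I$ then comes from the nonzero diagonal entries of the first block. The saddle-point perturbation also has to respect this block structure (the paper introduces the block-orthogonal matrix $V$ for this reason). Your local-minimum argument has the right spirit but is imprecise in places (e.g.\ ``enlarging the column span of $E$ by an eigenvector $u_j$'' mixes spaces, since $u_j\in\R^m$ relates to the column span of $D$, not of $E$; and the $|I|<k$ case needs care about the rank of $D$). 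These are fixable, but the main repair needed is to actually carry out the critical-point analysis under the weaker eigenvalue hypothesis rather than cite a theorem whose assumptions are not met.
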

The proof of Theorem \ref{THM:CP} is given in Appendix \ref{secappendix: proof of theorem}. As discussed in \citep{Kawaguchi16}, the assumptions of having full rank and distinct eigenvalues in the training data matrix $X$ (see Theorem 2.3 in \cite{Kawaguchi16}) are realistic and practically easy to satisfy. We require the same assumptions on $BX$ (instead), where $B$ is sampled from an FJLT distribution. We also compare our result with that of \cite{BaldiH89} and \cite{Kawaguchi16}, which study the optimization landscape of dense linear neural networks in Appendix \ref{secappendix: proof of theorem}. From Theorem \ref{THM:CP} it follows that if $B$ is fixed and only $D$ and $E$ are trained then a local minima is indeed a global minima. We use this to claim a worst-case guarantee using a two-phase learning approach to train an auto-encoder. In this case the optimal solution is denoted as $B_k(Y), D_{B},$ and $E_B$. Observe that when $Y=X$, $B_k(X)$ is the best rank $k$ approximation of $X$ computed from the rows of $BX$. 

\textbf{Two phase learning for auto-encoder}: Let $\ell = k\log k + k/\epsilon$ and consider a two phase learning strategy for auto-encoders, as follows: In phase one $B$ is sampled from an FJLT distribution, and then only $D$ and $E$ are trained keeping $B$ fixed. Suppose the algorithm learns $D'$ and $E'$ at the end of phase one, and $X' = D'E'B$. Then Theorem \ref{THM:CP} guarantees that, assuming $\Sigma(B)$ has $\ell$ distinct positive eigenvalues and $D', E'$ are a local minima, $D' = D_B$, $E'= E_B$, and $X'= B_k(X)$. Namely $X'$ is the best rank $k$ approximation of $X$ from the rows of $BX$. From Proposition \ref{fact: rank k approx after pre processing} with probability at least $\frac{1}{2}$, $\mathcal{L}(X') \leq (1+\epsilon)\Delta_k$. In the second phase all three matrices are trained to improve the loss.  In Sections \ref{EXP:AC} and \ref{subsec: experiment frozen butterfly} we train an encoder-decoder butterfly network using the standard gradient descent method. In these experiments the truncated butterfly network is initialized by sampling it from an FJLT distribution, and $D$ and $E$ are initialized randomly as in Pytorch. 
\section{Experiments}\label{sec: experiments on replacing layers in neural nets}
In this section we report the experimental results based on the ideas presented in Sections \ref{subsec: matrix approx using butterfly network} and \ref{subsec: encoder-decoder network}. 
%
The code for our experiments is publicly available (see \cite{AilonLN_code}).
\subsection{Replacing Dense Linear Layers by the Proposed Architecture}\label{EXP:DENSE}
This experiment replaces a dense linear layer of size $n_2\times n_1$ in common deep learning architectures with the network proposed in Section \ref{subsec: matrix approx using butterfly network}.\footnote{In all the architectures considered the final linear layer before the output layer is replaced, and $n_1$ and $n_2$ depend on the architecture.} The truncated butterfly networks are initialized by sampling it from the FJLT distribution, and the dense matrices are initialized randomly as in Pytorch. We set $k_1 =\log n_1$ and $k_2 = \log n_2$. The datasets and the corresponding architectures considered are summarized in Table \ref{tab:exp4_data}. 
\begin{figure*}[t]
	\centering
	\includegraphics[width=0.45\textwidth]{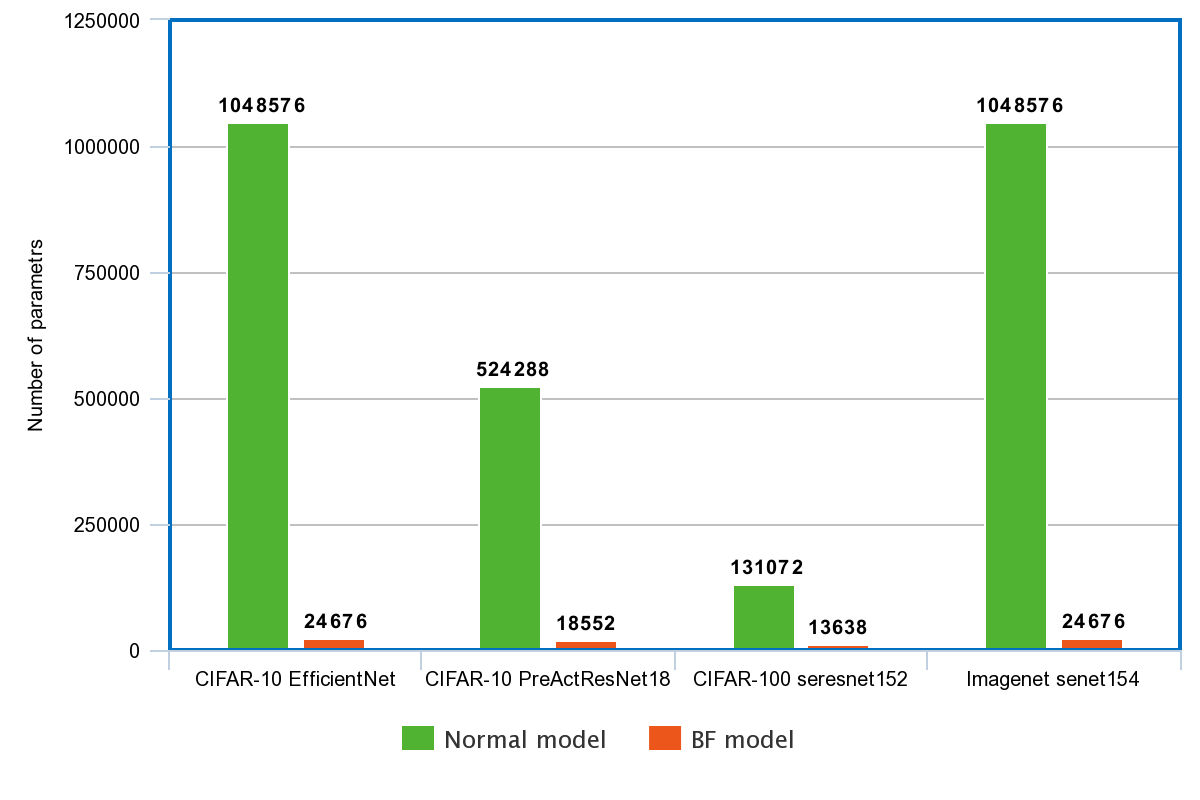} \qquad
	\includegraphics[width=0.45\textwidth]{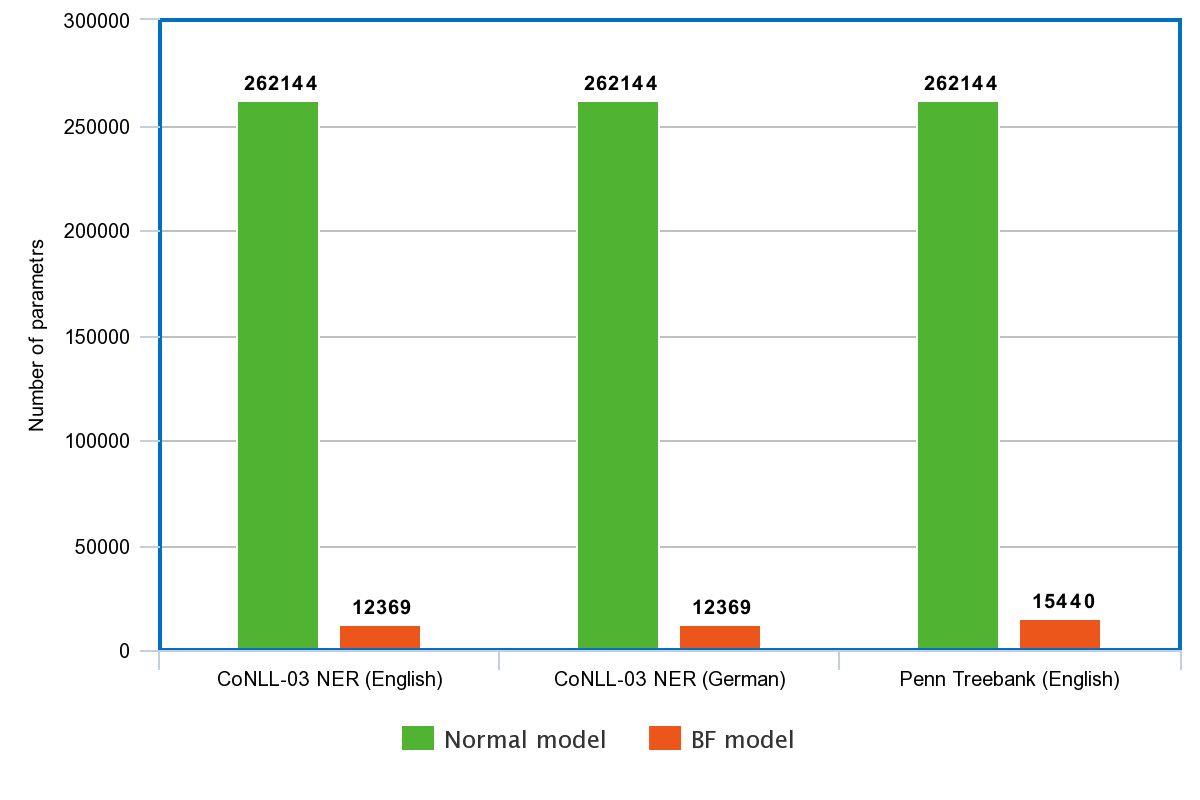} 
	\caption{Number of parameters in the dense linear layer of the original model and in the replaced butterfly based architecture; Left: Vision data, Right: NLP
	}
	\label{figurenumberofparams nlp}
\end{figure*}

For each dataset and model, the objective function is the same as defined in the model, and the generalization and convergence speed between the original model and the modified one (called the butterfly model for convenience) are compared. Figure \ref{figurenumberofparams nlp} reports the number of parameters in the dense linear layer of the original model, and in the replaced network, and Figure \ref{figurenumberofparams complete model} in Appendix \ref{subsecappendix: plots from dense} displays the number of parameter in the original model and the butterfly model. In particular, Figure \ref{figurenumberofparams nlp} shows the significant reduction in the number of parameters obtained by the proposed replacement.
\begin{figure}[!htb]
\begin{center}
  \includegraphics[width =0.7\linewidth]{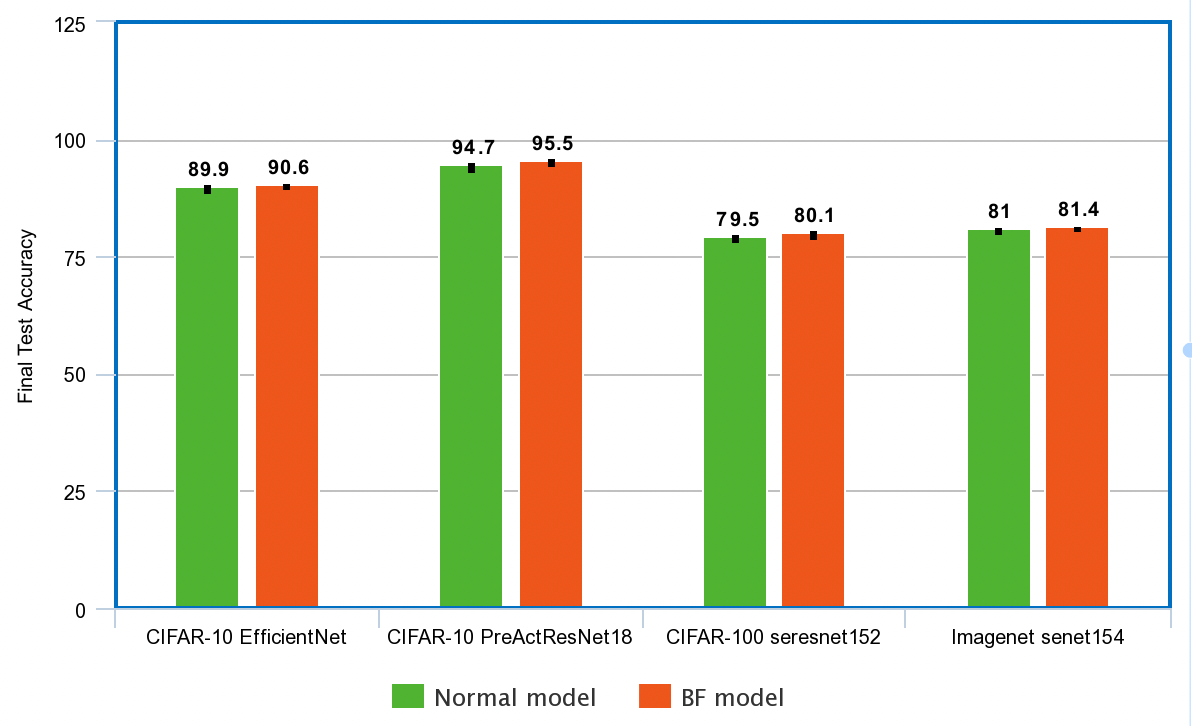}
  \caption{Comparison of final test accuracy with different image classification models and data sets}
  \label{figure: comparing final test accuracy}
 \end{center}
\end{figure}

\begin{figure}[!htb]
\begin{center}
  \includegraphics[width=0.7\linewidth]{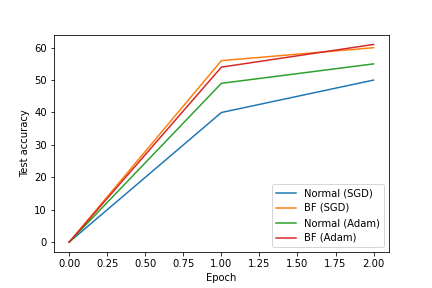}
\caption{Comparison of test accuracy in the first few epochs with different models and optimizers on CIFAR-10 with PreActResNet18}
  \label{figureimgclass}

\end{center}\end{figure}

Figure \ref{figure: comparing final test accuracy} reports the test accuracy of the original model and the butterfly model. The black vertical lines in Figures \ref{figure: comparing final test accuracy} denote the error bars corresponding to standard deviation, and the values above the rectangles denote the average accuracy.  In Figure \ref{figureimgclass} observe that the test accuracy for the butterfly model trained with stochastic gradient descent is even better than the original model trained with Adam in the first few epochs.
Figure \ref{figure20epochs} in Appendix \ref{subsecappendix: plots from dense} compares the test accuracy in the the first 20 epochs of the original and butterfly model. The results for the NLP tasks in the interest of space are reported in Figure \ref{figureNLP}, Appendix \ref{subsecappendix: plots from dense}. The training and inference times required for the original model and the butterfly model in each of these experiments are reported in Figures \ref{figuretrainingtime} and \ref{figuretrainingtime for nlp} in Appendix \ref{subsecappendix: plots from dense}. We remark that the modified architecture is also trained for fewer epochs. In almost all the cases the modified architecture does better than the normal architecture, both in the rate of convergence and in the final accuracy/$F1$ score. Moreover, the training time for the modified architecture is less.

\subsection{Encoder-Decoder Butterfly network with Synthetic Gaussian and Real Data}\label{EXP:AC}
This experiment tests whether gradient descent based techniques can be used to train an auto-encoder with a truncated butterfly gadget (see Section \ref{subsec: encoder-decoder network}). Five types of data matrices are tested: two are random and three are constructed using standard public real image datasets. For the matrices constructed from the image datasets, the input coordinates are randomly permuted, which ensures the network cannot take advantage of the spatial structure in the data.

Table \ref{tab:exp1_data} summarizes the data attributes. Gaussian $1$ and Gaussian $2$ are Gaussian matrices with rank $32$ and $64$ respectively. A Rank $r$ Gaussian matrix is constructed as follows: $r$ orthogonal vectors of size $1024$ are sampled at random and the columns of the matrix are determined by taking random linear combinations of these vectors, where the coefficients are chosen independently and uniformly at random from the Gaussian distribution with mean $0$ and variance $0.01$. The data matrix for MNIST is constructed as follows: each row corresponds  to an image represented as a  $28\times 28$ matrix (pixels) sampled uniformly at random from the MNIST database of handwritten digits \cite{MNIST2010} which is extended to a $32 \times 32$ matrix by padding numbers close to zero and then represented as a vector of size $1024$ in column-first ordering\footnote{Close to zero entries are sampled uniformly at random according to a Gaussian distribution with mean zero and variance $0.01$.}. Similar to the MNIST every row of the data matrix for Olivetti corresponds to an image represented as a $64 \times 64$ matrix sampled uniformly at random from the Olivetti faces data set \cite{Olivetti1992}, which is represented as a vector of size $4096$ in column-first ordering. Finally, for HS-SOD the data matrix is a $1024 \times 768$ matrix sampled uniformly at random from HS-SOD -- a dataset for hyperspectral images from natural scenes \cite{Hyper2018}. 

\begin{table}[ht!]
    \centering
    \begin{tabular}{|c|c|c|c|c|}
    \hline
        Name & $n$ & $d$ & rank  \\
        \hline
         Gaussian 1 & 1024 & 1024 & 32   \\
         Gaussian 2 & 1024 & 1024 & 64 \\
         MNIST & 1024 & 1024 & 1024 \\
         Olivetti & 1024 & 4096 & 1024  \\
         HS-SOD & 1024 & 768 & 768 \\
         \hline
    \end{tabular}
    \caption{Data used in the truncated butterfly auto-encoder reconstruction experiments}
    \label{tab:exp1_data}
\end{table}
For each of the data matrices the loss obtained via training the truncated butterfly network with the Adam optimizer is compared to $\Delta_k$ (denoted as PCA) and $||J_k(X) - X||^{2}_{\text{F}}$ where $J$ is an $\ell \times n$ matrix sampled from the FJLT distribution (denoted as FJLT+PCA).\footnote{PCA stands for principal component analysis which is a standard way to compute $X_k$.} Figures \ref{figapproxerrorgauss} and \ref{figapproxerrormnist} reports the loss on Gaussian 1 and MNIST respectively, whereas Figure \ref{figapproxappendix} in Appendix \ref{subsecappendix: plots from truncated butterfly} reports the loss for the remaining data matrices. Observe that for all values of $k$ the loss for the encoder-decoder butterfly network is almost equal to $\Delta_k$, and is in fact $\Delta_k$ for small and large values of $k$.
\begin{figure}[!htb]
\begin{center}
  \includegraphics[width=0.7\linewidth]{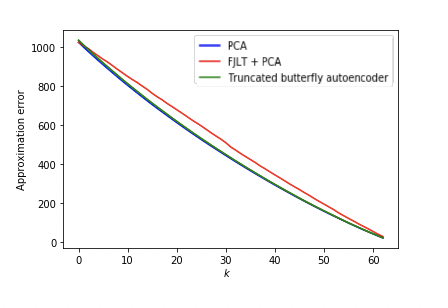}
  \caption{Approximation error on data matrix with various methods for various values of $k$ (Gaussian 1)}
  \label{figapproxerrorgauss}
\end{center}
\end{figure}
\begin{figure}[!htb]
\begin{center}
  \includegraphics[width=0.7\linewidth]{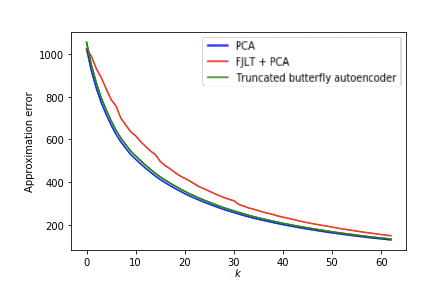}
\caption{Approximation error on data matrix with various methods for various values of $k$ (MNIST)}
\label{figapproxerrormnist}
\end{center}
\end{figure}

%
\subsection{Two-phase Learning}\label{subsec: experiment frozen butterfly}
This experiment is similar to the experiment in Section \ref{EXP:AC} but the training in this case is done in two phases. In the first phase, $B$ is fixed and the network is trained to determine an optimal $D$ and $E$. In the second phase, the optimal $D$ and $E$ determined in phase one are used as the initialization, and the network is trained over $D, E$ and $B$ to minimize the loss. Theorem \ref{THM:CP} ensures worst-case guarantees for this two phase training (see below the theorem). Figure \ref{figure: two phase training} reports the approximation error of an image from Imagenet. The red and green lines in Figure \ref{figure: two phase training} correspond to the approximation error at the end of phase one and two respectively. 
\begin{figure}[!htb]
\begin{center}
  \includegraphics[width=0.7\linewidth]{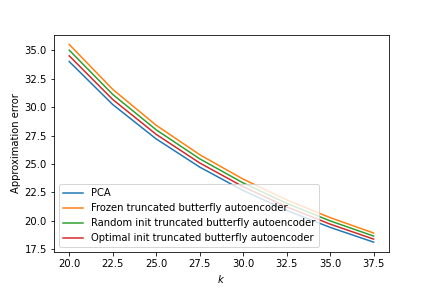}
  \caption{Approximation error on data matrix with various methods for various values of $k$ (Gaussian 1)}
\label{figure: two phase training}
\end{center}
\end{figure}

%
\section{Sketching for Low-Rank Matrix Decomposition}\label{subsec: supervise learned butterfly}
This experiment was inspired by the recent influential work by \cite{IndykVY19}, which considers a supervised learning approach to compute an  $\ell\times n$ pre-conditioning matrix $B$, where $\ell \ll n$, such that for $X \in \R^{n\times d}$, the best rank $k$ approximation of $X$ from the rows of $BX$ (denoted $B_{k}(X)$) is optimized. The matrix $B$  has a fixed sparse structure determined a priory as in \citep{ClarksonW09}, and the non-zero entries are learned to minimize the loss over a training set of matrices. The results in \citep{IndykVY19} suggest that a learned matrix $B$ significantly improves the guarantee compared to a random sketching matrix as in \citep{ClarksonW09}. Our setting is similar to that in \citep{IndykVY19}, except that $B$ is now represented as an $\ell \times n$ truncated butterfly gadget. Our experiments on several datasets show that indeed a learned truncated butterfly gadget does better than a random matrix, and even a learned $B$ as in \citep{IndykVY19}.


\textbf{Setup}: Suppose $X_1, \ldots , X_t \in \R^{n\times d}$ are training matrices sampled from a distribution $\mathcal{D}$. Then a $B$ is computed that minimizes the following empirical loss
\begin{equation}
    \sum_{i\in [t]} ||X_i - B_k(X_i)||^{2}_{\text{F}}
\end{equation}
We compute $B_k(X_i)$ using truncated SVD of $BX_i$ (as in Algorithm 1, \citep{IndykVY19}). The matrix $B$ is learned by the back-propagation algorithm that uses a differentiable SVD implementation to calculate the gradients, followed by optimization with Adam such that the butterfly structure of $B$ is maintained. The learned $B$ can be used as the pre-processing matrix for any matrix in the future. The test error for a matrix $B$ and a test set $\mathsf{Te}$ is defined as follows: 
$$ \text{Err}_{\mathsf{Te}}(B) = \mathbf{E}_{X \sim \mathsf{Te}}\left[\, ||X-B_k(X)||^{2}_{\text{F}} \, \right] - \text{App}_{\mathsf{Te}}, $$
where $\text{App}_{\mathsf{Te}} = \mathbf{E}_{X \sim \mathsf{Te}}[\ ||X-X_k||^{2}_{\text{F}}\ ]$.

\textbf{Experiments and Results}: The experiments are performed on the datasets shown in Table \ref{tab:exp3_data}. In HS-SOD \citep{Hyper2018} and CIFAR-10 \citep{Krizhevsky09learningmultiple} 400 training matrices ($t=400$), and 100 test matrices are sampled, while in Tech 200 \citep{Tech2004}, training matrices ($t=200$), and 95 test matrices are sampled. In Tech, each matrix has 835,422 rows but on average only 25,389 rows and 195 columns contain non-zero entries. For the same reason as in Section \ref{EXP:AC}  in each dataset, the coordinates of each row are randomly permuted. Some of the matrices in the datasets have much larger singular values than the others, and to avoid
imbalance in the dataset, the matrices are normalized so that their top singular values are all equal, as done in \citep{IndykVY19}. 
\begin{table}[ht!]
    \centering
    \begin{tabular}{|c|c|c|c|c|}
    \hline
        Name & $n$ & $d$  \\
        \hline
         HS-SOD 1 & 1024 & 768   \\
         CIFAR-10  & 32 & 32 \\
         Tech & 25,389 & 195   \\
         \hline
    \end{tabular}
    \caption{Data used in the Sketching algorithm for low-rank matrix decomposition experiments.}
    \label{tab:exp3_data}
\end{table}
For each of the datasets, the test error for the learned $B$ via our truncated butterfly structure is compared to the test errors for the following three cases: 1) $B$ is a learned as a sparse sketching matrix as in \cite{IndykVY19}, b) $B$ is a random sketching matrix as in \cite{ClarksonW09}, and c) $B$ is an $\ell\times n$  Gaussian matrix. Figure \ref{figurek10m20} compares the test error for $\ell=20$, and $k=10$, where $\text{App}_{\mathsf{Te}} = 10.56$. Figure \ref{figtesterrl20k1} in Appendix \ref{subsecappendix: plots from supervise learned butterfly} compares the test errors of the different methods in the extreme case when $k=1$, and Figure \ref{figtesterrork10} in Appendix \ref{subsecappendix: plots from supervise learned butterfly} compares the test errors of the different methods for various values of $\ell$. Table \ref{figuretable} in Appendix \ref{subsecappendix: plots from supervise learned butterfly} reports the test error for different values of $\ell$ and $k$. Figure \ref{figurelossperstep} in Appendix \ref{subsecappendix: plots from supervise learned butterfly} shows the test error for $\ell=20$ and $k=10$ during the training phase on HS-SOD. In Figure \ref{figurelossperstep} it is observed that the butterfly learned is able to surpass sparse learned after merely a few iterations.
\begin{figure}[!htb]
\minipage{0.45\textwidth}
  \includegraphics[width=\linewidth]{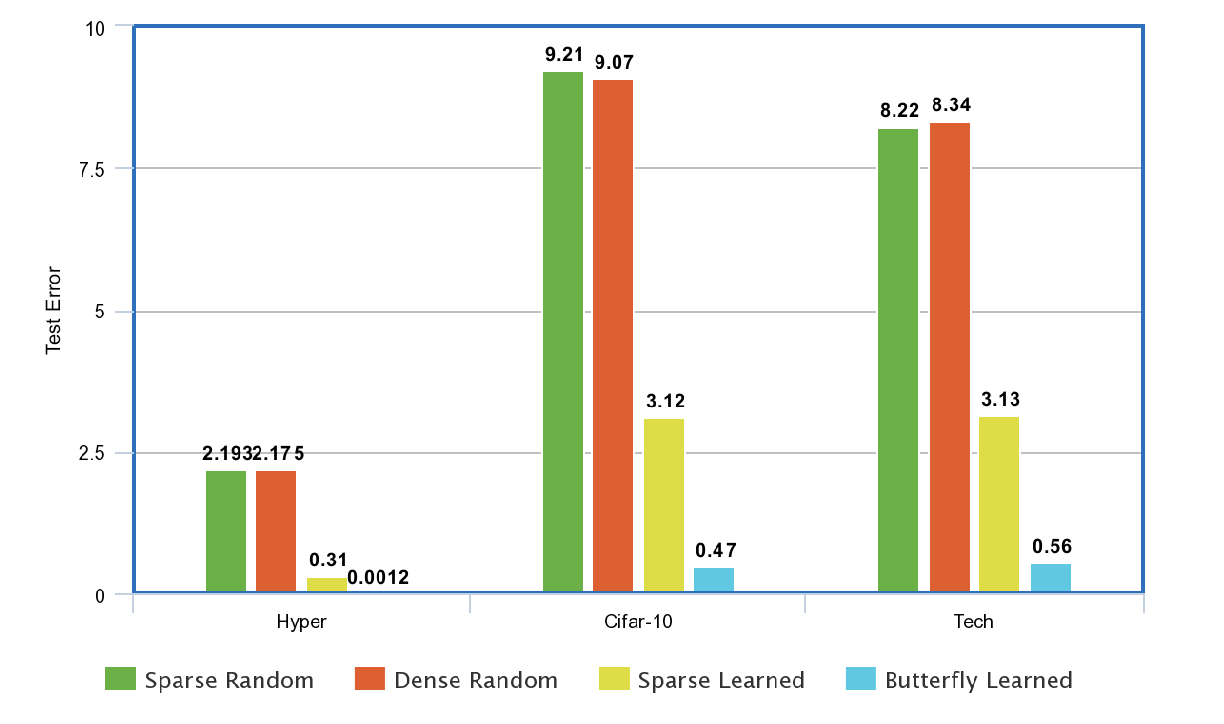}
  \caption{Test error by different sketching matrices on different data sets}
  \label{figurek10m20}
\endminipage\hfill
\minipage{0.45\textwidth}
  \includegraphics[width=\linewidth]{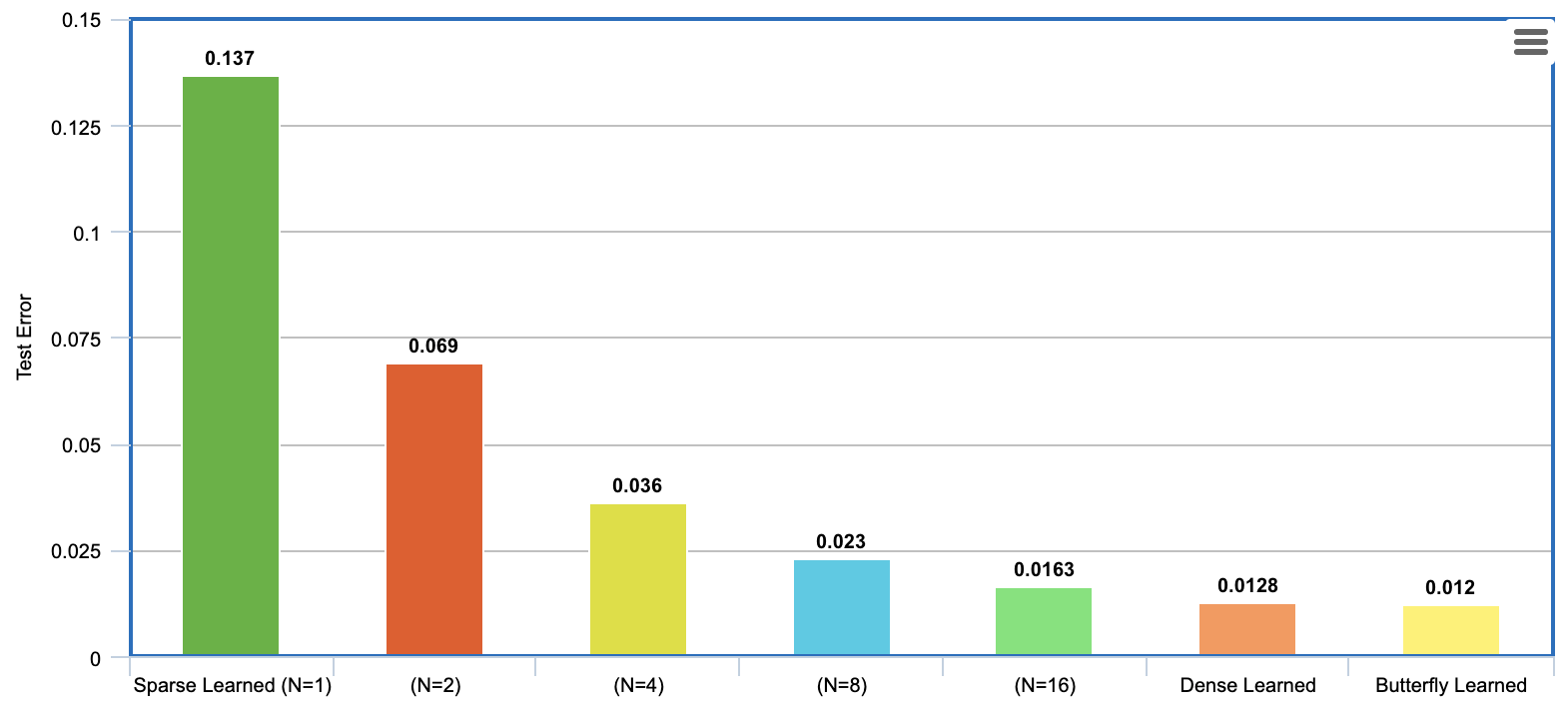}
  \caption{Test errors for various values of $N$ and a learned butterfly matrix}
  \label{figuretradeoff}
\endminipage\hfill
\end{figure}

Figure \ref{figuretradeoff} compares  the test error for the learned $B$ via our truncated butterfly structure to a learned matrix $B$  with $N$ non-zero entries in each column -- the $N$ non-zero location for each column are chosen uniformly at random. The reported test errors are on HS-SOD, when $\ell =20$ and $k = 10$. 
Interestingly, the error for butterfly learned is not only less than the error for sparse learned ($N=1$ as in \citep{IndykVY19}) but also less than than the error for dense learned ($N=20$). In particular, our results indicate that using a learned butterfly sketch can significantly reduce the approximation loss compared to using a learned sparse sketching matrix.

\section{Conclusion}\label{sec: conclusion}

\textbf{Discussion}: Among other things,
this work showed that it is beneficial to replace  dense  linear layer in  deep learning architectures with a more compact 
architecture (in terms of number of parameters), using truncated butterfly networks.  This 
approach is justified using
ideas from efficient matrix approximation theory from the last two decades. 
however, results in additional logarithmic depth to the network.  This issue raises the question of whether
the extra depth may harm convergence of gradient descent optimization.  
To start answering this question, we show, both empirically
and theoretically, that
in linear encoder-decoder networks
in which the encoding is done using a butterfly network, this typically does not happen.
To further demonstrate the utility of truncated butterfly networks, we consider a supervised learning approach as in \cite{IndykVY19}, where we learn
how to 
derive low rank approximations
of a distribution of matrices by multiplying a pre-processing linear operator represented as a butterfly network,
with weights trained using a sample of the distribution.

\textbf{Future Work}: The main open questions arising from the work are related to better
understanding the optimization landscape of butterfly networks.
The current tools for analysis of
deep linear networks do not apply
for these structures, and more
theory is necessary. It would be interesting to determine whether replacing dense linear layers in any network, with butterfly networks as in Section~\ref{subsec: matrix approx using butterfly network} \emph{harms} the convergence
of the original matrix. Another direction would be to check empirically whether adding non-linear gates between the layers (logarithmically many) of a butterfly network improves the performance of the network. In the experiments in Section \ref{EXP:DENSE}, we have replaced a single dense layer by our proposed architecture. It would be worthwhile to check whether replacing multiple dense linear layers in the different architectures harms the final accuracy. 
Similarly, it might be insightful to replace a convolutional layer by an architecture based on truncated butterfly network. Finally, since our proposed replacement reduces the number of parameters in the network, it might be possible to empirically show that the new network is more resilient to over-fitting.
%
\section*{Acknowledgements} 
 This project has received funding from European Union’s Horizon 2020 research and innovation program under grant agreement No 682203 -ERC-[ Inf-Speed-Tradeoff].

\bibliographystyle{plainnat}

\bibliography{ailon_467}

\appendix
\section{Butterfly Diagram from Section \ref{SEC:INTRO}} \label{subsection bf plots}
Figure \ref{figbfnet} referred to in the introduction is given here.
\begin{figure}[!htb]
\centering
\minipage{0.45\textwidth}
  \includegraphics[width=\linewidth]{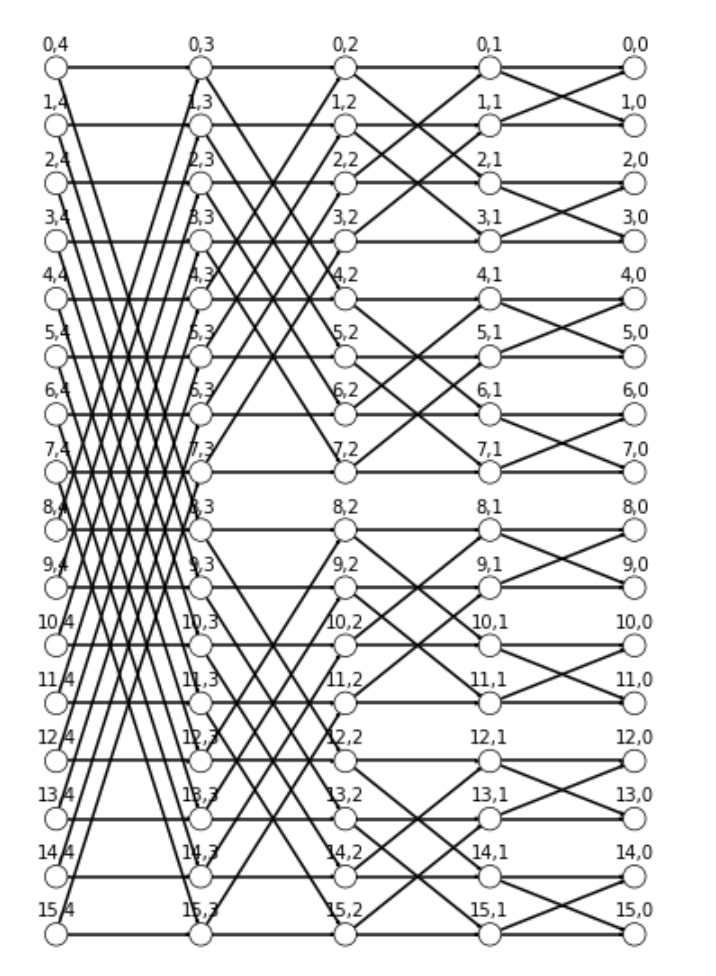}
\endminipage\hfill
\minipage{0.45\textwidth}
  \includegraphics[width=\linewidth]{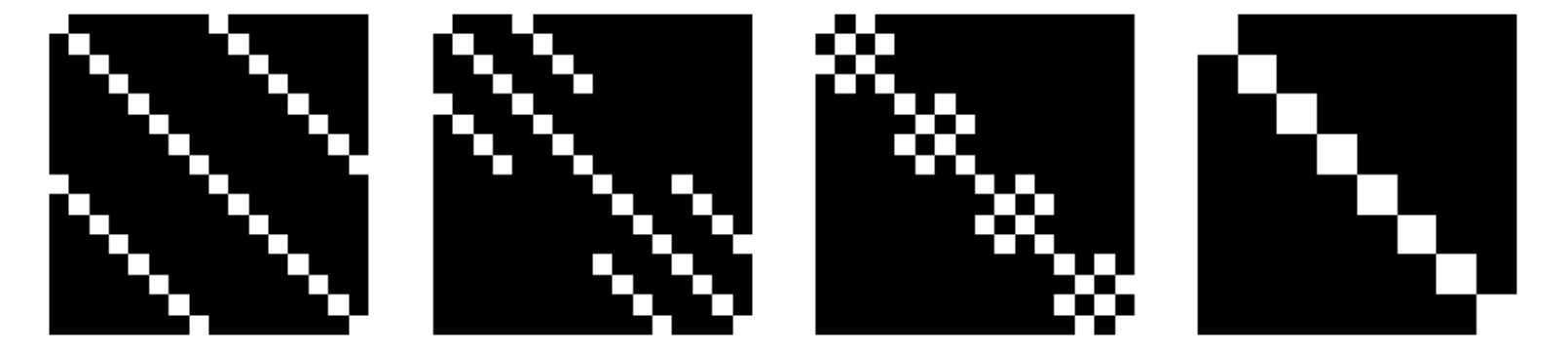}
\endminipage\hfill
 \caption{A $16 \times 16$ butterfly network represented as a $4$-layered graph on the left, and as product of $4$ sparse matrices on the right. The white entries are the non-zero entries of the matrices.}
  \label{figbfnet}
\end{figure}
\section{Proof of Proposition \ref{FACT:MATAPPFJLT}}\label{secappendix: proof of matrix approx theorem}
The proof of the proposition will use the following well known fact (Lemma \ref{nirlemma} below) about FJLT (more generally, JL) distributions (see \citet{,AilonC09,DBLP:journals/dcg/AilonL09,KrahmerW2011}).
\begin{lemma}\label{nirlemma}
Let $\mathbf{x}\in \R^n$ be a unit vector, and let $J\in \R^{k\times n}$ be a matrix drawn from an FJLT distribution.  Then for all $\epsilon<1$ with probability at least $1-e^{-\Omega(k\epsilon^2)}$: 
\begin{equation}\label{nirproof0}\|\mathbf{x} - J^TJ\mathbf{x}\| \leq \epsilon\ .
\end{equation}
\end{lemma}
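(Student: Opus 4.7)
My plan is to reduce the lemma to the standard one-vector Johnson--Lindenstrauss (JL) concentration already established for FJLT, via the algebraic identity
\begin{equation*}
\|\mathbf{x} - J^TJ \mathbf{x}\|^2 \;=\; \|\mathbf{x}\|^2 \;-\; 2\, \mathbf{x}^T J^T J \mathbf{x} \;+\; \|J^T J \mathbf{x}\|^2 \;=\; 1 \;-\; 2\|J\mathbf{x}\|^2 \;+\; (J\mathbf{x})^T (J J^T)(J\mathbf{x}),
\end{equation*}
where I used $\|\mathbf{x}\|=1$, the identity $\mathbf{x}^T J^T J \mathbf{x} = \|J\mathbf{x}\|^2$, and $(J^T J)^2 = J^T (JJ^T) J$. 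This expansion isolates two pieces: a JL-type bilinear form $\|J\mathbf{x}\|^2$ on the fixed vector $\mathbf{x}$, and a quartic piece that depends on how $JJ^T$ acts on $J\mathbf{x}$.

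The first piece is handled immediately by the JL property defining FJLT (as in \cite{AilonC09,DBLP:journals/dcg/AilonL09,KrahmerW2011}): for any fixed unit $\mathbf{x}$ and $J$ drawn from an FJLT distribution, $\bigl|\|J\mathbf{x}\|^2 - 1\bigr| \leq \epsilon$ with probability at least $1 - e^{-\Omega(k\epsilon^2)}$. For the second piece I would use the concrete factorization $J = PHD$ (random diagonal sign $D$, orthogonal Hadamard/butterfly transform $H$, uniform coordinate subsampling $P$): a short direct calculation gives $JJ^T = c\,I_k$ for an explicit constant $c$ determined by the normalization, since $DD^T = I$, $HH^T = I$, and $PP^T = I_k$ when $P$ samples without replacement. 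Substituting collapses the quartic term to $c\|J\mathbf{x}\|^2$, reducing the entire expression to a scalar affine function of the single JL-concentrated quantity $\|J\mathbf{x}\|^2$.

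Combining the two steps and taking square roots yields $\|\mathbf{x} - J^TJ\mathbf{x}\| \leq \epsilon$; the bookkeeping on the exponent matches $e^{-\Omega(k\epsilon^2)}$ up to absorbed constants in the $\Omega$, as is standard for sub-Gaussian concentration (if needed, one reparameterizes the JL accuracy at a mild cost in the constant hidden inside $\Omega(\cdot)$).

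The main obstacle is controlling the quartic term $\|J^TJ\mathbf{x}\|^2 = (J\mathbf{x})^T(JJ^T)(J\mathbf{x})$, which is not itself a JL bilinear form on a fixed vector. The partial-isometry structure built into the FJLT factorization is what rescues the argument: it makes $JJ^T$ a scalar multiple of $I_k$ and thereby collapses the quartic term to a quadratic one in $\|J\mathbf{x}\|^2$. Without this structural identity one would instead have to invoke an $\epsilon$-net argument over the $k$-dimensional range of $J^T$, paying an $e^{O(k)}$ factor in the union bound that is still absorbed by the JL tail $e^{-\Omega(k\epsilon^2)}$ in the relevant parameter regime, but using the explicit FJLT structure avoids this complication entirely.
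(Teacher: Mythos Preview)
The paper does not actually prove this lemma: it is stated as a ``well known fact'' with citations to the FJLT literature and then used as a black box in the proof of Proposition~\ref{FACT:MATAPPFJLT}. So there is no paper argument to compare against, only your attempted derivation---and that derivation contains a genuine gap.

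Your algebraic reduction is correct: with $JJ^T = cI_k$ (which does hold for the subsampled--randomized--Hadamard form $J=\sqrt{n/k}\,RHD$) one obtains
\[
\|\mathbf{x}-J^TJ\mathbf{x}\|^2 \;=\; 1 \;+\; (c-2)\,\|J\mathbf{x}\|^2 .
\]
The step that fails is the last one, where you assert that substituting the JL estimate $\|J\mathbf{x}\|^2\approx 1$ and ``taking square roots'' yields $\le\epsilon$. It does not. Under the JL normalization one necessarily has $c=n/k$: indeed $\operatorname{tr}(JJ^T)=\operatorname{tr}(J^TJ)=\sum_{i=1}^n\|Je_i\|^2=n$ exactly for SRHT, so $ck=n$. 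Plugging $\|J\mathbf{x}\|^2=1+\delta$ with $|\delta|\le\epsilon$ into your affine expression gives
\[
\|\mathbf{x}-J^TJ\mathbf{x}\|^2 \;=\; \tfrac{n}{k}-1 \;+\;\bigl(\tfrac{n}{k}-2\bigr)\delta ,
\]
which is $\Theta(n/k)$, not $O(\epsilon^2)$. More structurally, $J^TJ\mathbf{x}$ always lies in the $k$-dimensional row space of $J$, so $\|\mathbf{x}-J^TJ\mathbf{x}\|\ge\operatorname{dist}\bigl(\mathbf{x},\operatorname{row}(J)\bigr)=\sqrt{1-\|J\mathbf{x}\|^2/c}\approx\sqrt{1-k/n}$, which is bounded away from $0$ whenever $k\ll n$. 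Your alternative $\epsilon$-net route runs into exactly the same obstruction, since it too would only establish $JJ^T\approx (n/k)I_k$.

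In short, your own computation---carried through with the actual value of $c$---shows that the vector inequality $\|(I-J^TJ)\mathbf{x}\|\le\epsilon$ cannot hold for the standard SRHT (or Gaussian) JL construction in the regime $k\ll n$. What the cited references establish is the scalar bound $\bigl|\mathbf{x}^T(I-J^TJ)\mathbf{x}\bigr|=\bigl|\|J\mathbf{x}\|^2-1\bigr|\le\epsilon$; the vector bound in the lemma is strictly stronger and does not follow from your reduction.
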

By Lemma~\ref{nirlemma} we have that with probability at least $1-e^{-\Omega( k_1 \epsilon^2)}$,
\begin{equation}\label{nirproof1}
\|\mathbf{x} - J_1^T J_1 \mathbf{x}\| \leq \epsilon\|\mathbf{x}\| = \epsilon \ .
\end{equation}
Henceforth, we condition on the event $\|\mathbf{x} - J_1^T J_1 \mathbf{x}\| \leq \epsilon \|\mathbf{x}\|$. 
Therefore, by the definition of spectral norm $\|W\|$ of $W$:
\begin{equation}\label{nirproof2}
\|W\mathbf{x} - WJ_1^T J_1 \mathbf{x}\| \leq \epsilon\|W\| \ .
\end{equation}
Now apply Lemma~\ref{nirlemma} again on the vector $ WJ_1^T J_1 \mathbf{x}$ and transformation $J_2$ to get that with probability at least $1-e^{-\Omega(k_2\epsilon^2)}$, \begin{equation}\label{nirproof4}\|WJ_1^T J_1 \mathbf{x}-J_2^T J_2 WJ_1^T J_1 \mathbf{x}\| \leq \epsilon \|WJ_1^T J_1 \mathbf{x}\|.
\end{equation}
Henceforth, we condition on the event $\|WJ_1^T J_1 \mathbf{x}-J_2^T J_2 WJ_1^T J_1 \mathbf{x}\| \leq \epsilon \|WJ_1^T J_1 \mathbf{x}\|$.
To bound the last right hand side, we use the triangle inequality together with (\ref{nirproof2}):
\begin{equation}\label{nirproof5}
\|WJ_1^TJ_1\mathbf{x}\| \leq \|W\mathbf{x}\| + \epsilon \|W\| \leq \|W\|(1+\epsilon).
\end{equation}
Combining (\ref{nirproof4}) and (\ref{nirproof5}) gives:
\begin{equation}\label{nirproof6}\|WJ_1^T J_1 \mathbf{x}-J_2^T J_2 WJ_1^T J_1 \mathbf{x}\| \leq \epsilon \|W\|(1+\epsilon).
\end{equation}
Finally,
\begin{eqnarray}
\|J_2^T J_2 WJ_1^T J_1 \mathbf{x} - W\mathbf{x}\| &=& \|(J_2^T J_2 WJ_1^T J_1 \mathbf{x} - WJ_1^T J_1 \mathbf{x}) + (WJ_1^T J_1 \mathbf{x}  - W\mathbf{x})\| \nonumber \\
&\leq & \epsilon \|W\|(1+\epsilon) + \epsilon \|W\| \nonumber \\
&=& \|W\|\epsilon(2+\epsilon) \leq 3\|W\|\epsilon\ ,
\end{eqnarray}
where the first inequality is from the triangle inequality together with (\ref{nirproof2}) and (\ref{nirproof6}), and the second inequality is from the bound on $\epsilon$.
The proposition is obtained by adjusting the constants hiding inside the $\Omega()$ notation in the exponent in the proposition statement.
\section{Proof of Theorem \ref{THM:CP}}\label{secappendix: proof of theorem}
We first note that our result continues to hold even if $B$ in the theorem is replaced by any structured matrix. For example the result continues to hold if $B$ is an $\ell \times n$ matrix with one non-zero entry per column, as is the case with a random sparse sketching matrix \cite{ClarksonW09}. We also compare our result with that \cite{BaldiH89,Kawaguchi16}.

\textbf{Comparison with \cite{BaldiH89} and \cite{Kawaguchi16}}: The critical points of the encoder-decoder network are analyzed in \cite{BaldiH89}. Suppose the eigenvalues of $YX^T(XX^T)^{-1}XY^T$ are $\gamma_1 > \ldots > \gamma_{m}>0$ and $k\leq m \leq n$. Then they show that corresponding to a critical point there is an $I\subseteq [m]$ such that the loss at this critical point is equal to $\text{tr}(YY^T) - \sum_{i\in I} \gamma_i$, and the critical point is a local/global minima if and only if $I=[k]$. \cite{Kawaguchi16} later generalized this to prove that a local minima is a global minima for an arbitrary number of hidden layers in a linear neural network if $m\leq n$. Note that since $\ell \leq n$ and $m \leq n$ in Theorem \ref{THM:CP}, replacing $X$ by $BX$ in \cite{BaldiH89} or \cite{Kawaguchi16} does not imply Theorem \ref{THM:CP} as it is.

Next, we introduce a few notation before delving into the proof. Let $r = (\overline{Y}-Y)^T$, and $\text{vec}(r) \in \R^{md}$ is the entries of $r$ arranged as a vector in column-first ordering, $(\nabla_{\textnormal{vec}(D^T)}\mathcal{L}(\overline{Y}))^T \in \R^{mk}$ and $(\nabla_{\textnormal{vec}(E^T)}\mathcal{L}(\overline{Y}))^T \in \R^{k\ell}$  denote the partial derivative of $\mathcal{L}(\overline{Y})$ with respect to the parameters in $\text{vec}(D^T)$ and $\text{vec}(E^T)$ respectively. Notice that $\nabla_{\textnormal{vec}(D^T)}\mathcal{L}(\overline{Y})$ and $\nabla_{\textnormal{vec}(E^T)}\mathcal{L}(\overline{Y})$ are row vectors of size $mk$ and $k\ell$ respectively. Also, let $P_D$ denote the projection matrix of $D$, and hence if $D$ is a matrix with full column-rank then $P_D = D(D^T\cdot D)^{-1}\cdot D^T$. The $n\times n$ identity matrix is denoted as $I_{n}$, and for convenience of notation let $\tilde{X} = B\cdot X$. First we prove the following lemma which gives an expression for $D$ and $E$ if  $\nabla_{\textnormal{vec}(D^T)}\mathcal{L}(\overline{Y})$ and $\nabla_{\textnormal{vec}(E^T)}\mathcal{L}(\overline{Y})$ are zero.
\begin{lemma}[Derivatives with respect to $D$ and $E$] \label{lemma: derivative}
\begin{enumerate}
    \item[]
    \item $\nabla_{\textnormal{vec}(D^T)} \mathcal{L}(\overline{Y}) = \text{vec}(r)^T (I_{m} \otimes (E\cdot \tilde{X})^T)$, and
    \item $\nabla_{\textnormal{vec}(E^T)} \mathcal{L}(\overline{X}) = \text{vec}(r)^T (D \otimes \tilde{X})^T $
\end{enumerate}
\end{lemma}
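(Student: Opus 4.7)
The plan is to reduce both identities to a single application of the Kronecker--vectorization identity $\text{vec}(ABC) = (C^T\otimes A)\,\text{vec}(B)$ combined with the chain rule. Writing $r = \tilde{X}^T E^T D^T - Y^T$ with $\tilde{X}=BX$, the loss becomes $\mathcal{L}(\overline{Y}) = \text{vec}(r)^T\text{vec}(r)$ (any overall scalar being absorbed into the lemma's gradient convention), so the gradient with respect to a matrix parameter is $\text{vec}(r)^T$ left-multiplying the Jacobian of $\text{vec}(r)$ with respect to the vectorization of that parameter.

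For part (1), I would isolate $D^T$ as the rightmost factor by writing $\tilde{X}^T E^T D^T = (\tilde{X}^T E^T)\,D^T\,I_m$ and applying the vec identity with $A=\tilde{X}^T E^T$, $B=D^T$, $C=I_m$, which yields $\partial\,\text{vec}(r)/\partial\,\text{vec}(D^T) = I_m\otimes(E\tilde{X})^T$; left-multiplying by $\text{vec}(r)^T$ then produces the first formula immediately. For part (2), I would instead isolate $E^T$ as the middle factor by taking $A=\tilde{X}^T$, $B=E^T$, $C=D^T$, obtaining $\partial\,\text{vec}(r)/\partial\,\text{vec}(E^T) = D\otimes\tilde{X}^T$, from which left-multiplication by $\text{vec}(r)^T$ gives the second formula (using the Kronecker transpose identity $(A\otimes B)^T = A^T\otimes B^T$ to rearrange into the exact form displayed in the lemma).

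The only real obstacle is conventional bookkeeping: because $r$ is defined as $(\overline{Y}-Y)^T$ rather than $\overline{Y}-Y$, and the vectorization is column-first, care is needed in choosing which factor of the triple product plays the role of $B$ in each application of the vec identity, and in verifying that the Jacobian dimensions ($md\times mk$ for part (1) and $md\times k\ell$ for part (2)) match the target row-vector shapes. Once the convention is pinned down, each identity follows in a single line with no substantive calculation left; the content is purely the correct packaging of the chain rule via Kronecker products, and this packaging is what the subsequent analysis of critical points in Theorem~\ref{THM:CP} consumes.
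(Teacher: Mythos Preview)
Your approach is essentially identical to the paper's: both write $\mathcal{L}$ as (a scalar multiple of) $\text{vec}(r)^T\text{vec}(r)$, apply the chain rule to get $\text{vec}(r)^T$ times the Jacobian of $\text{vec}(r)$, and compute that Jacobian via the identity $\text{vec}(ABC)=(C^T\otimes A)\,\text{vec}(B)$ with exactly the same choices of $A,B,C$ in each part. One caveat on your last remark: invoking $(A\otimes B)^T=A^T\otimes B^T$ cannot turn your correctly derived Jacobian $D\otimes\tilde X^T$ into the displayed form $(D\otimes\tilde X)^T$ (that would be $D^T\otimes\tilde X^T$); the mismatch is a typo in the lemma statement itself, and indeed the paper's own proof ends with $\text{vec}(r)^T(D\otimes\tilde X^T)$, agreeing with your computation.
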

\begin{proof}
\begin{enumerate}
    \item Since $\mathcal{L}(\overline{Y}) = \frac{1}{2}\text{vec}(r)^T\cdot \text{vec}(r)$,
\begin{align*}
    \nabla_{\text{vec}(D^T)} \mathcal{L}(\overline{Y}) &= \text{vec}(r)^T \cdot \nabla_{\text{vec}(D^T)} \text{vec}(r) 
    ~= ~\text{vec}(r)^T (\text{vec}_{(D^T)}(\tilde{X}^T\cdot E^T \cdot D^T) )\\
     &= \text{vec}(r)^T (I_{m} \otimes (E\cdot \tilde{X})^T  )\cdot \nabla_{\text{vec}(D^T)} \text{vec}(D^T) 
     ~~~=~~~ \text{vec}(r)^T (I_{m} \otimes (E\cdot \tilde{X})^T)
\end{align*}
 \item Similarly,
 \begin{align*}
\nabla_{\text{vec}(E^T)} \mathcal{L}(\overline{Y}) &= \text{vec}(r)^T \cdot \nabla_{\text{vec}(E^T)} \text{vec}(r) 
    ~=~ \text{vec}(r)^T (\text{vec}_{(E^T)}(\tilde{X}^T\cdot E^T \cdot D^T) )\\
     &= \text{vec}(r)^T (D \otimes \tilde{X}^T )\cdot \nabla_{\text{vec}(E^T)} \text{vec}(E^T) 
     ~=~\text{vec}(r)^T (D \otimes \tilde{X}^T)
\end{align*}
\end{enumerate}
\end{proof}
Assume the rank of $D$ is equal to $p$. Hence there is an invertible matrix $C \in \R^{k\times k}$ such that $\tilde{D} = D\cdot C$ is such that the last $k - p$ columns of $\tilde{D}$ are zero and the first $p$ columns of $\tilde{D}$ are linearly independent (via Gauss elimination). Let $\tilde{E} = C^{-1}\cdot E$. Without loss of generality it can be assumed $\tilde{D} \in \R^{d\times p}$, and $\tilde{E}\in \R^{p\times d}$, by restricting restricting $\tilde{D}$ to its first $p$ columns (as the remaining are zero) and $\tilde{E}$ to its first $p$ rows. Hence, $\tilde{D}$ is a full column-rank matrix of rank $p$, and $D E = \tilde{D} \tilde{E}$. Claims \ref{claim: representation at the critical point} and \ref{claim: commutativity of P_D with sum} aid us in the completing the proof of the theorem. First the proof of theorem is completed using these claims, and at the end the two claims are proved.
\begin{claim}[Representation at the critical point]\label{claim: representation at the critical point}
\begin{enumerate}
    \item[]
    \item $\tilde{E} = (\tilde{D}^T \tilde{D})^{-1} \tilde{D}^T  Y  \tilde{X}^T (\tilde{X} \cdot \tilde{X}^T)^{-1} $
    \item $\tilde{D} \tilde{E} = P_{\tilde{D}}  Y \tilde{X}^T  (\tilde{X} \cdot \tilde{X}^T)^{-1}$
\end{enumerate}
\end{claim}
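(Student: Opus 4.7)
The plan is to convert the vanishing of the gradients from the preceding Lemma into a matrix equation for $\tilde D\tilde E$ using the standard identities $\textnormal{vec}(BXA)=(A^T\otimes B)\textnormal{vec}(X)$ and $(A\otimes B)^T=A^T\otimes B^T$, and then solve for $\tilde E$.

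Setting $\nabla_{\textnormal{vec}(E^T)}\mathcal{L}(\overline Y)=0$ and transposing gives $(D\otimes \tilde X^T)^T\textnormal{vec}(r)=0$, which after the Kronecker identity becomes the matrix equation $\tilde X\,r\,D=0$, where $r=\tilde X^T E^T D^T - Y^T$. (Analogously, $E\tilde X\,r = 0$ from the other gradient, but I will not need this.) The key step is to pass to the rank-revealing factorization. Since by construction $DC=[\tilde D\mid 0]$ with $\tilde D\in\R^{m\times p}$ of full column rank and $C$ invertible, one has $\tilde X\,r\,D = \tilde X\,r\,[\tilde D\mid 0]\,C^{-1}$, which vanishes iff $\tilde X\,r\,\tilde D=0$. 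Moreover $r = \tilde X^T(DE)^T - Y^T = \tilde X^T\tilde E^T\tilde D^T - Y^T$, since $DE=\tilde D\tilde E$. Substituting,
$$\tilde X\tilde X^T\tilde E^T\tilde D^T\tilde D \;=\; \tilde X\,Y^T\tilde D.$$

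From here the rest is immediate. By assumption (a), $\tilde X\tilde X^T = BXX^TB^T$ is invertible, and $\tilde D^T\tilde D$ is invertible because $\tilde D$ has full column rank. Transposing the displayed equation and multiplying on the left by $(\tilde D^T\tilde D)^{-1}$ and on the right by $(\tilde X\tilde X^T)^{-1}$ yields
$$\tilde E \;=\; (\tilde D^T\tilde D)^{-1}\tilde D^T\,Y\,\tilde X^T(\tilde X\tilde X^T)^{-1},$$
which is item~1. Multiplying on the left by $\tilde D$ and recognizing $P_{\tilde D}=\tilde D(\tilde D^T\tilde D)^{-1}\tilde D^T$ gives item~2.

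The main obstacle is mostly bookkeeping: correctly tracking shapes and transposes through the Kronecker/$\textnormal{vec}$ calculus, and verifying that the rank reduction from $D$ (possibly rank-deficient) to $\tilde D$ (full column rank) preserves the critical-point identity without loss. Note that assumption (b) on distinct eigenvalues is not used in this claim; it will be invoked only in the subsequent claim, when identifying the index set $I$ that determines the value of the loss.
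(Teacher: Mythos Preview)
Your proof is correct and follows essentially the same route as the paper: both unvectorize the condition $\nabla_{\textnormal{vec}(E^T)}\mathcal{L}=0$ from the preceding Lemma into the matrix equation $\tilde X\,r\,D=0$, pass from $D$ to the full-column-rank $\tilde D$ via the invertible $C$, and then solve for $\tilde E$ using the invertibility of $\tilde D^T\tilde D$ and $\tilde X\tilde X^T$. The only cosmetic difference is the order in which the transpose and the reduction $D\mapsto\tilde D$ are performed; your remark that assumption~(b) is not needed for this claim is also accurate.
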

\begin{claim}\label{claim: commutativity of P_D with sum}
\begin{enumerate}
    \item $\tilde{E} B \tilde{D} = (\tilde{E}B Y \tilde{X}^T \tilde{E}^T)(\tilde{E} \tilde{X} \tilde{X}^T \tilde{E}^T)^{-1} $
    \item $P_{\tilde{D}} \Sigma = \Sigma P_{\tilde{D}} = P_{\tilde{D}} \Sigma  P_{\tilde{D}}$
\end{enumerate}
\end{claim}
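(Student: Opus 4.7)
The plan is to derive Claim \ref{claim: commutativity of P_D with sum} directly from the two criticality conditions $\nabla_{\text{vec}(D^T)}\mathcal{L}=0$ and $\nabla_{\text{vec}(E^T)}\mathcal{L}=0$ supplied by Lemma \ref{lemma: derivative}, after passing to the reduced parametrization $\tilde D\in\R^{m\times p}$, $\tilde E\in\R^{p\times\ell}$ in which $\tilde D$ has full column rank $p=\operatorname{rank}(D)$. Since $DE=\tilde D\tilde E$, both criticality identities transfer unchanged to the tilded variables, and throughout $(\tilde X\tilde X^T)^{-1}$ is legitimate by assumption (a) of Theorem \ref{THM:CP} while $(\tilde D^T\tilde D)^{-1}$ is legitimate by full column rank of $\tilde D$. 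The expression for $\tilde E$ in Claim \ref{claim: representation at the critical point}(1) is obtained from the $E$-gradient, and Claim \ref{claim: representation at the critical point}(2) then follows by left-multiplication by $\tilde D$ and the identity $P_{\tilde D}=\tilde D(\tilde D^T\tilde D)^{-1}\tilde D^T$.

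For part (1), I would start from the $D$-gradient identity. Undoing the vectorization in Lemma \ref{lemma: derivative}(1) and setting the result to zero yields $(\tilde D\tilde E\tilde X-Y)\tilde X^T\tilde E^T=0$, i.e.\
\[
\tilde D\,\bigl(\tilde E\tilde X\tilde X^T\tilde E^T\bigr)\;=\;Y\tilde X^T\tilde E^T.
\]
Once one checks that $\tilde E\tilde X\tilde X^T\tilde E^T$ is invertible at the critical point (see the obstacle below), this solves for $\tilde D$; the identity stated in Claim \ref{claim: commutativity of P_D with sum}(1) then comes out after left-multiplying by $\tilde E$ and rearranging.

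For part (2), the plan is to feed Claim \ref{claim: representation at the critical point}(2) into the $D$-gradient identity of the previous paragraph. Substituting $\tilde D\tilde E=P_{\tilde D}Y\tilde X^T(\tilde X\tilde X^T)^{-1}$ into $\tilde D\tilde E\tilde X\tilde X^T\tilde E^T=Y\tilde X^T\tilde E^T$ collapses the middle factor to give $P_{\tilde D}\,Y\tilde X^T\tilde E^T=Y\tilde X^T\tilde E^T$. Next, using Claim \ref{claim: representation at the critical point}(1) in transposed form, $\tilde E^T=(\tilde X\tilde X^T)^{-1}\tilde X Y^T\tilde D(\tilde D^T\tilde D)^{-1}$, so both sides factor through $\Sigma=Y\tilde X^T(\tilde X\tilde X^T)^{-1}\tilde X Y^T$ and yield $P_{\tilde D}\,\Sigma\,\tilde D(\tilde D^T\tilde D)^{-1}=\Sigma\,\tilde D(\tilde D^T\tilde D)^{-1}$. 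Right-multiplying by $\tilde D^T$ produces $P_{\tilde D}\Sigma P_{\tilde D}=\Sigma P_{\tilde D}$. Transposing this identity, and using that both $P_{\tilde D}$ and $\Sigma$ are symmetric, gives the mirror relation $P_{\tilde D}\Sigma=P_{\tilde D}\Sigma P_{\tilde D}$; concatenating the two equalities closes part (2).

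The main obstacle I anticipate is the invertibility bookkeeping in part (1): one must verify that $\tilde E\tilde X\tilde X^T\tilde E^T$ is nonsingular at the critical point, which amounts to $\tilde E$ having full row rank $p$. This is not immediate from the reduction to $\tilde D,\tilde E$; I would argue it from the closed-form expression for $\tilde E$ in Claim \ref{claim: representation at the critical point}(1) together with the full column rank of $\tilde D$ and the assumption that $\Sigma$ has $\ell$ distinct positive eigenvalues, which prevents the products in that formula from dropping rank. Once this rank tracking is handled, everything else is algebraic manipulation in the spirit of \cite{BaldiH89}; the only genuinely non-mechanical step is the symmetry-plus-substitution trick that converts the $D$-stationarity equation into the commutativity $P_{\tilde D}\Sigma=\Sigma P_{\tilde D}$.
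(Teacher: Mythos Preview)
Your proposal is correct and follows essentially the same route as the paper: both start from the $D$-stationarity equation $\tilde D\,\tilde E\tilde X\tilde X^T\tilde E^T=Y\tilde X^T\tilde E^T$, invoke Claim~\ref{claim: representation at the critical point}(1) to justify that $\tilde E$ has full row rank (hence $\tilde E\tilde X\tilde X^T\tilde E^T$ is invertible) for part~(1), and then combine this equation with Claim~\ref{claim: representation at the critical point}(2) plus a right-multiplication by $\tilde D^T$ and a symmetry/transpose step to obtain $P_{\tilde D}\Sigma P_{\tilde D}=\Sigma P_{\tilde D}=P_{\tilde D}\Sigma$. The only cosmetic difference is the order in which you substitute Claim~\ref{claim: representation at the critical point} versus right-multiply by $\tilde D^T$ in part~(2); the paper multiplies first and substitutes second, while you do the reverse, arriving at the same identity.
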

We denote $\Sigma(B)$ as $\Sigma$ for convenience. Since $\Sigma$ is a real symmetric matrix, there is an orthogonal matrix $U$ consisting of the eigenvectors of $\Sigma$, such that $\Sigma = U\wedge U^T$, where $\wedge$ is a $m\times m$ diagonal matrix whose first $\ell$ diagonal entries are $\lambda_1, \ldots, \lambda_\ell$ and the remaining entries are zero. Let $u_1, \ldots, u_{m}$ be the columns of $U$. Then for $i\in [\ell]$, $u_i$ is the eigenvector of $\Sigma$ corresponding to the eigenvalue $\lambda_i$, and $\{u_{\ell+1}, \ldots , u_{d_y}\}$ are the eigenvectors of $\Sigma$ corresponding to the eigenvalue $0$. \\
\\
Note that $P_{U^T\tilde{D}} = U^T\tilde{D}(\tilde{D}^TU^TU \tilde{D})^{-1} \tilde{D}^TU = U^T P_{\tilde{D}} U$, and from part two of Claim \ref{claim: commutativity of P_D with sum} we have
\begin{align}
    (U P_{U^T\tilde{D}} U^T) \Sigma &= \Sigma (U P_{U^T\tilde{D}} U^T) \\
    U\cdot P_{U^T\tilde{D}} \wedge U^T & = U \wedge P_{U^T\tilde{D}} U^T \\
    P_{U^T\tilde{D}} \wedge &= \wedge P_{U^T\tilde{D}}
\end{align}
Since $P_{U^T\tilde{D}}$ commutes with $\wedge$, $P_{U^T\tilde{D}}$ is a block-diagonal matrix comprising of two blocks $P_1$ and $P_2$: the first block $P_1$ is an $\ell \times \ell$ diagonal block, and $P_2$ is a $(m-\ell) \times (m-\ell)$ matrix. Since $P_{U^T\tilde{D}}$ is orthogonal projection matrix of rank $p$ its eigenvalues are $1$ with multiplicity $p$ and $0$ with multiplicity $m-p$. Hence at most $p$ diagonal entries of $P_1$ are $1$ and the remaining are $0$. Finally observe that
\begin{align*}
    \mathcal{L}(\overline{Y}) &= \text{tr}((\overline{Y}-Y)(\overline{Y}-Y)^T)\\
    &= \text{tr}(YY^T) - 2\text{tr}(\overline{Y}Y^T) + \text{tr}(\overline{Y}\overline{Y}^T)\\ 
    &= \text{tr}(YY^T) - 2\text{tr}(P_{\tilde{D}}\Sigma) + \text{tr}(P_{\tilde{D}} \Sigma P_{\tilde{D}}) \\
    & = \text{tr}(YY^T) - \text{tr}(P_{\tilde{D}}\Sigma)
\end{align*}
The second line in the above equation follows using the fact that $\text{tr}(\overline{Y}Y^T) = \text{tr}(Y\overline{Y}^T)$, the third line in the above equation follows by substituting $\overline{Y} = P_{\tilde{D}}Y \tilde{X}^T \cdot (\tilde{X} \cdot \tilde{X}^T)^{-1}\cdot \tilde{X}$ (from part two of Claim \ref{claim: representation at the critical point}), and the last line follows from part two of Claim \ref{claim: commutativity of P_D with sum}. Substituting $\Sigma = U\wedge U^T$, and $P_{\tilde{D}} = UP_{U^T\tilde{D}}U^T$ in the above equation we have,
\begin{align*}
    \mathcal{L}(\overline{Y}) &= \text{tr}(YY^T) - \text{tr}(UP_{U^T\tilde{D}}\wedge U^T) \\
    &= \text{tr}(YY^T) - \text{tr}(P_{U^T\tilde{D}}\wedge)
\end{align*}
The last line the above equation follows from the fact that $\text{tr}(UP_{\tilde{U^TD}}\wedge U^T) = \text{tr}(P_{U^T\tilde{D}}\wedge U^TU) = \text{tr}(P_{U^T\tilde{D}}\wedge)$. From the structure of $P_{U^T\tilde{D}}$ and $\wedge$ it follows that there is a subset $I\subseteq [\ell]$, $|I| \leq p$ such that $\text{tr}(P_{U^T\tilde{D}}\wedge) = \sum_{i\in I}\lambda_i$. Hence, $\mathcal{L}(\overline{Y}) = \text{tr}(YY^T) - \sum_{i\in I}\lambda_i$. \\
\\
Since $P_{\tilde{D}} = UP_{U^T\tilde{D}}U^T$, there is a $p\times p$ invertible matrix $M$ such that  
$$\tilde{D} = (U\cdot V)_{I'}\cdot M~~, \text{and}~~~ \tilde{E} = M^{-1}(V^TU^T)_{I'}Y \tilde{X}^T(\tilde{X}\tilde{X}^T)^{-1}$$ 
where $V$ is a block-diagonal matrix consisting of two blocks $V_1$ and $V_2$: $V_1$ is equal to $I_{\ell}$, and $V_2$ is an $(m-\ell) \times (m -\ell)$ orthogonal matrix, and $I'$ is such that $I\subseteq I'$ and $|I'| = p$. The relation for $\tilde{E}$ in the above equation follows from part one of Claim \ref{claim: representation at the critical point}. Note that if $I' \subseteq [\ell]$, then $I = I'$, that is $I$ consists of indices corresponding to eigenvectors of non-zero eigenvalues.  \\
\\
Recall that $\tilde{D}$ was obtained by truncating the last $k-p$ zero rows of $DC$, where $C$ was a $k\times k$ invertible matrix simulating the Gaussian elimination. Let $[M|O_{p\times (k-p)}]$ denoted the $p\times k$ matrix obtained by augmenting the columns of $M$ with $(k-p)$ zero columns. Then 
$$D = (UV)_{I'}[M|O_{p\times (k-p)}]C^{-1} ~.$$ 
Similarly, there is a $p\times (k-p)$ matrix $N$ such that 
$$E = C[{ \scriptstyle \frac{M^{-1}}{N}}]((UV)_{I'})^T Y \tilde{X}^T(\tilde{X}\tilde{X}^T)^{-1}~$$
where $[\frac{M^{-1}}{N}]$ denotes the $k\times p$ matrix obtained by augmenting the rows of $M^{-1}$ with the rows of $N$. 
Now suppose $I \neq [k]$, and hence $I' \neq [k]$. Then we will show that there are matrices $D'$ and $E'$ arbitrarily close to $D$ and $E$ respectively such that if $Y' = D'E'\tilde{X}$ then $\mathcal{L}(Y') < \mathcal{L}(\overline{Y})$. There is an $a\in [k]\setminus I'$, and $b\in I'$ such that $\lambda_a > \lambda_b$ ($\lambda_b$ could also be zero). Denote the columns of the matrix $UV$ as $\{v_1, \ldots , v_{m}\}$, and observe that $v_i = u_i$ for $i\in [\ell]$ (from the structure of $V$). For $\epsilon >0$ let $u'_b = (1+\epsilon^2)^{-\frac{1}{2}}(v_b + \epsilon u_a)$. Define $U'$ as the matrix which is equal to $UV$ except that the column vector $v_b$ in $UV$ is replaced by $u'_b$ in $U'$. Since $a\in [k] \subseteq [\ell]$ and $a\notin I'$, $v_a = u_a$ and $(U'_{I'})^T U'_{I'} = I_p$. Define 
$$D' = U'_{I'}[M|O_{p\times (k-p)}]C^{-1}~~, \text{and}~~~  E' = C[{ \scriptstyle \frac{M^{-1}}{N}}](U'_{I'})^T Y \tilde{X}^T(\tilde{X}\tilde{X}^T)^{-1}~$$   
and let $Y' = D'E'\tilde{X}$. Now observe that, $D'E' = U'_{I'}(U_{I'})^T Y \tilde{X}^T(\tilde{X}\tilde{X}^T)^{-1}$, and that 
$$\mathcal{L}(Y') = \text{tr}(YY^T) - \sum_{i\in I}\lambda_i - \frac{\epsilon^2}{1+\epsilon^2}(\lambda_a -\lambda_b) = \mathcal{L}(\overline{Y}) - \frac{\epsilon^2}{1+\epsilon^2}(\lambda_a -\lambda_b)$$
Since $\epsilon$ can be set arbitrarily close to zero, it can be concluded that there are points in the neighbourhood of $\overline{Y}$ such that the loss at these points are less than $\mathcal{L}(\overline{Y})$. Further, since $\mathcal{L}$ is convex with respect to the parameters in $D$ (respectively $E$), when the matrix $E$ is fixed (respectively $D$ is fixed) $\overline{Y}$ is not a local maximum. Hence, if $I \neq [k]$ then $\overline{Y}$ represents a saddle point, and in particular $\overline{Y}$ is local/global minima if and only if $I = [k]$.

\begin{proof}[Proof of Claim \ref{claim: representation at the critical point}]
Since $\nabla_{\textnormal{vec}(E^T)} \mathcal{L}(\overline{X})$ is equal to zero, from the second part of Lemma \ref{lemma: derivative} the following holds, 
\begin{align*}
    \tilde{X} (Y - \overline{Y})^T D &= \tilde{X} Y^T  D -  \tilde{X}\overline{Y}^T D = 0\\
  \Rightarrow \tilde{X}  \tilde{X}^T E^T D^T D   &=  \tilde{X} Y^T D
\end{align*}
Taking transpose on both sides
\begin{align}\label{equation: derivative at critical point with respect to E}
  \Rightarrow D^T D E \tilde{X} \tilde{X}^T &= D^T Y \tilde{X}^T 
\end{align}
Substituting $DE$ as $\tilde{D}\tilde{E}$ in Equation \ref{equation: derivative at critical point with respect to E}, and multiplying Equation \ref{equation: derivative at critical point with respect to E} by $C^{T}$ on both the sides from the left, Equation \ref{equation: derivative at critical point with respect to E after rank manipulation} follows.
\begin{align}\label{equation: derivative at critical point with respect to E after rank manipulation}
  \Rightarrow \tilde{D}^T \tilde{D} \tilde{E} \tilde{X} \tilde{X}^T &= \tilde{D}^T Y \tilde{X}^T 
\end{align}
Since $\tilde{D}$ is full-rank, we have
\begin{equation}\label{equation: for E at the critical point after rank manipulation}
   \tilde{E} = (\tilde{D}^T \tilde{D})^{-1} \tilde{D}^T  Y  \tilde{X}^T (\tilde{X} \tilde{X}^T)^{-1} . 
\end{equation}
and, 
\begin{equation}\label{equation: expression of decoder encoder at the critical point}
    \tilde{D} \tilde{E} = P_{\tilde{D}}  Y \tilde{X}^T  (\tilde{X}  \tilde{X}^T)^{-1}
\end{equation}
\end{proof}

\begin{proof}[Proof of Claim \ref{claim: commutativity of P_D with sum}]
Since $\nabla_{\textnormal{vec}(D^T)} \mathcal{L}(\overline{Y})$ is zero, from the first part of Lemma \ref{lemma: derivative} the following holds,
\begin{equation*}
    E\tilde{X} (Y - \overline{Y})^T = E\tilde{X} Y^T -  E \tilde{X}\cdot\overline{Y}^T = 0
\end{equation*}
\begin{equation}\label{equation: derivative at critical point with respect to D}
  \Rightarrow   E\tilde{X} \tilde{X}^T E^T D^T   =  E \tilde{X} Y^T  
\end{equation}
Substituting $E^T\cdot D^T$ as $\tilde{E}^T\cdot \tilde{D}^T$ in Equation \ref{equation: derivative at critical point with respect to E}, and multiplying Equation \ref{equation: derivative at critical point with respect to D} by $C^{-1}$ on both the sides from the left Equation \ref{equation: derivative at critical point with respect to D after rank manipulation} follows.
\begin{equation}\label{equation: derivative at critical point with respect to D after rank manipulation}
    \tilde{E}\tilde{X} \tilde{X}^T \tilde{E}^T \tilde{D}^T   = \tilde{E} \tilde{X} Y^T  
\end{equation}
Taking transpose of the above equation we have,
\begin{equation}\label{equation: derivative at critical point with respect to D after rank manipulation and transpose}
   \tilde{D}\tilde{E} \tilde{X} \tilde{X}^T \tilde{E}^T = Y  \tilde{X}^T \tilde{E}^T 
\end{equation}
From part 1 of Claim \ref{claim: representation at the critical point}, it follows that $\tilde{E}$ has full row-rank, and hence $\tilde{E} \tilde{X} \tilde{X}^T\tilde{E}^T$ is invertible. Multiplying the inverse of $\tilde{E} \tilde{X} \tilde{X}^T \tilde{E}^T$ from the right on both sides and multiplying $\tilde{E}B$ from the left on both sides of the above equation we have,
\begin{equation}
    \tilde{E}B \tilde{D} = (\tilde{E} B Y \tilde{X}^T \tilde{E}^T)(\tilde{E} \tilde{X} \tilde{X}^T \tilde{E}^T)^{-1} 
\end{equation}
This proves part one of the claim. Moreover, multiplying Equation \ref{equation: derivative at critical point with respect to D after rank manipulation and transpose} by $\tilde{D}^T$ from the right on both sides
\begin{align*}
   \tilde{D} \tilde{E} \tilde{X} \tilde{X}^T \tilde{E}^T \tilde{D}^T &= Y \tilde{X}^T \tilde{E}^T \tilde{D}^T  \\
 \Rightarrow   (P_{\tilde{D}} Y \tilde{X}^T (\tilde{X} \tilde{X}^T)^{-1}) ( \tilde{X}  \tilde{X}^T)  ((\tilde{X}  \tilde{X}^T)^{-1}  \tilde{X} Y^T P_{\tilde{D}}) &= Y  \tilde{X}^T ((\tilde{X} \tilde{X}^T)^{-1}  \tilde{X} Y^T\cdot P_{\tilde{D}})\\
  \Rightarrow  P_{\tilde{D}} Y \tilde{X}^T (\tilde{X}  \tilde{X}^T)^{-1}  \tilde{X} Y^T P_{\tilde{D}} &= Y  \tilde{X}^T (\tilde{X} \tilde{X}^T)^{-1}  \tilde{X} Y^T\cdot P_{\tilde{D}}
\end{align*}
The second line the above equation follows by  substituting $\tilde{D}\tilde{E} = P_{\tilde{D}} Y \tilde{X}^T (\tilde{X} \tilde{X}^T)^{-1}$ (from part 2 of Claim \ref{claim: representation at the critical point}). Substituting $\Sigma = Y \tilde{X}^T (\tilde{X} \tilde{X}^T)^{-1} \tilde{X} Y^T$ in the above equation we have
\begin{equation*}
    P_{\tilde{D}} \Sigma  P_{\tilde{D}} = \Sigma \cdot P_{\tilde{D}}
\end{equation*}
Since $P_{\tilde{D}}^T = P_{\tilde{D}}$, and $\Sigma^{T}=\Sigma$, we also have $\Sigma P_{\tilde{D}} = P_{\tilde{D}} \Sigma$.
\end{proof}

\section{Additional Tables and Plots related to Dense Layer Replacement}

\subsection{Plots from Section \ref{EXP:DENSE}}\label{subsecappendix: plots from dense}
Figure \ref{figurenumberofparams complete model} displays the number of parameter in the original model and the butterfly model.  Figure \ref{figureNLP} reports the results for the NLP tasks done as part of experiment in Section \ref{EXP:DENSE}. Figures \ref{figuretrainingtime} and \ref{figuretrainingtime for nlp} reports the training and inference times required for the original model and the butterfly model in each of the experiments. The training and and inference times in Figures \ref{figuretrainingtime} and \ref{figuretrainingtime for nlp} are averaged over 100 runs. Figure \ref{figure20epochs} is the same as the right part of Figure \ref{figureimgclass} but here we compare the test accuracy of the original and butterfly model for the the first 20 epochs.

\begin{figure}[!htb]
\minipage{0.45\textwidth}
  \includegraphics[width=\linewidth]{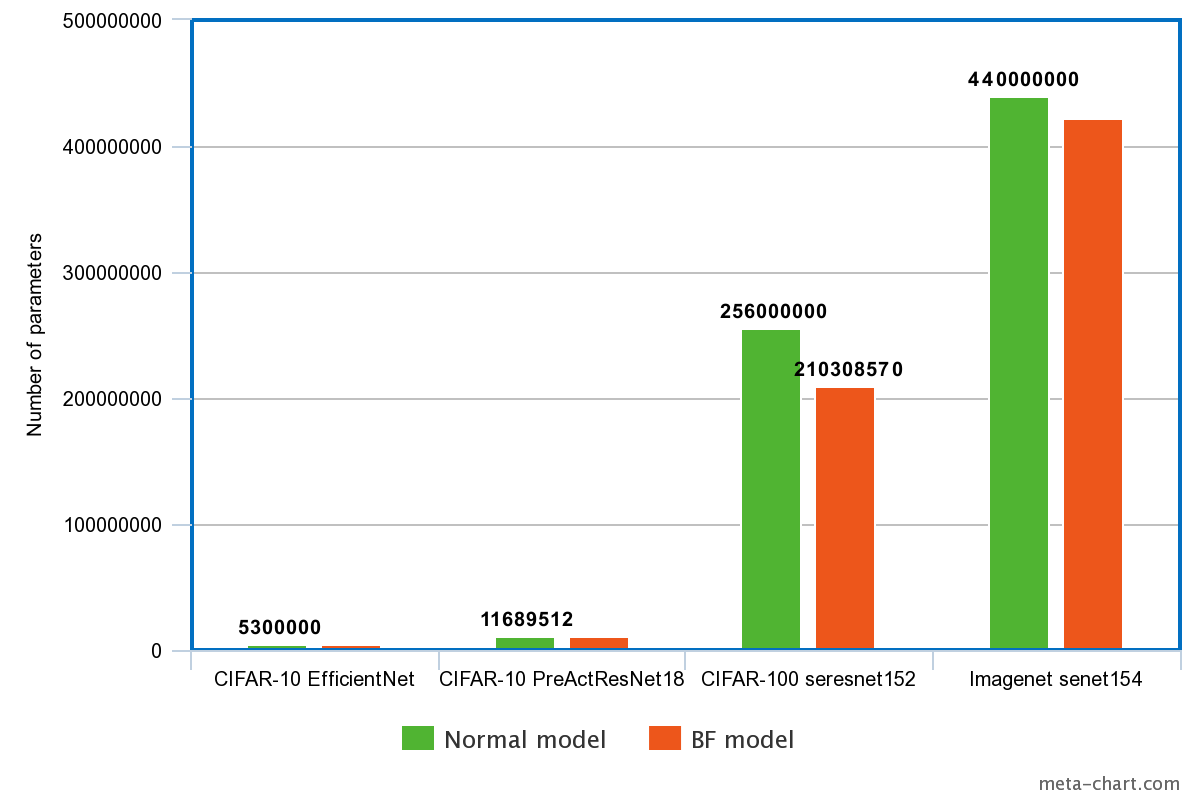}
  \endminipage\hfill
  \minipage{0.45\textwidth}
  \includegraphics[width=\linewidth]{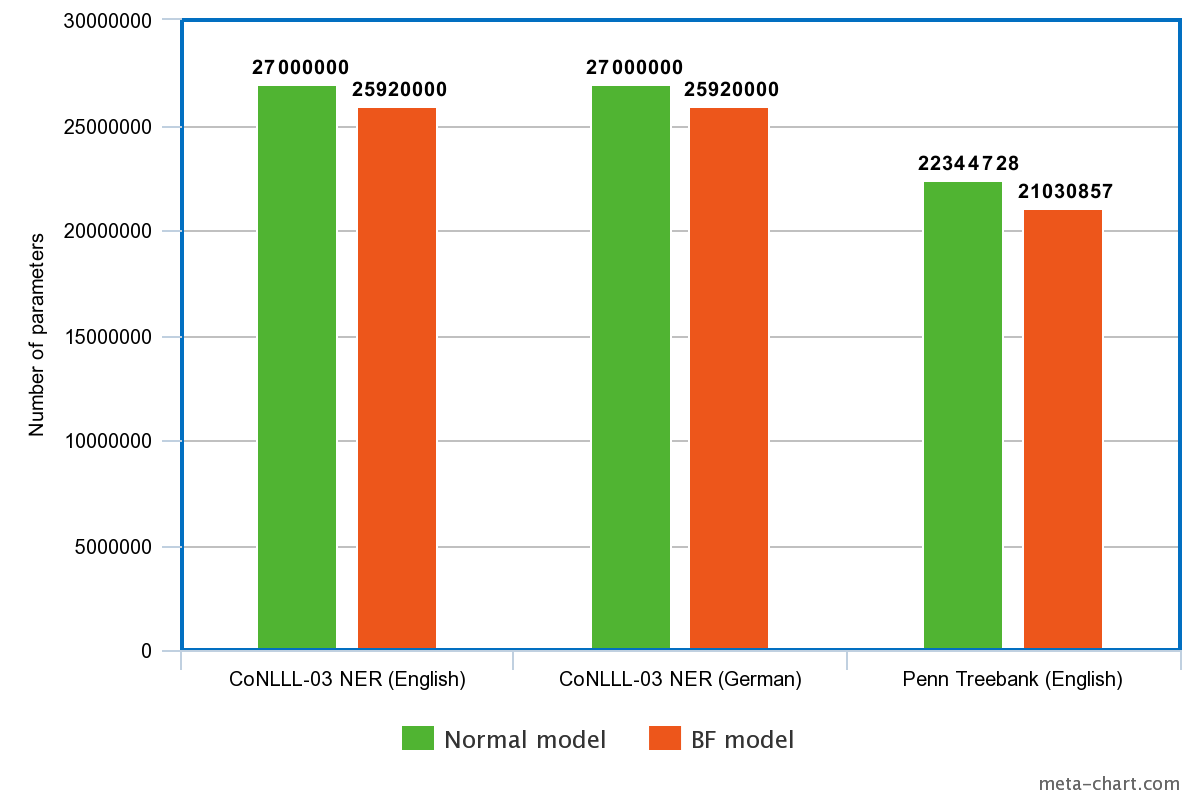}
\endminipage\hfill
\caption{Total number of parameters in the original model and the butterfly model; Left: Vision data, Right: NLP}
  \label{figurenumberofparams complete model}
\end{figure}

\begin{figure}[!htb]
\minipage{0.45\textwidth}
  \includegraphics[width=\linewidth]{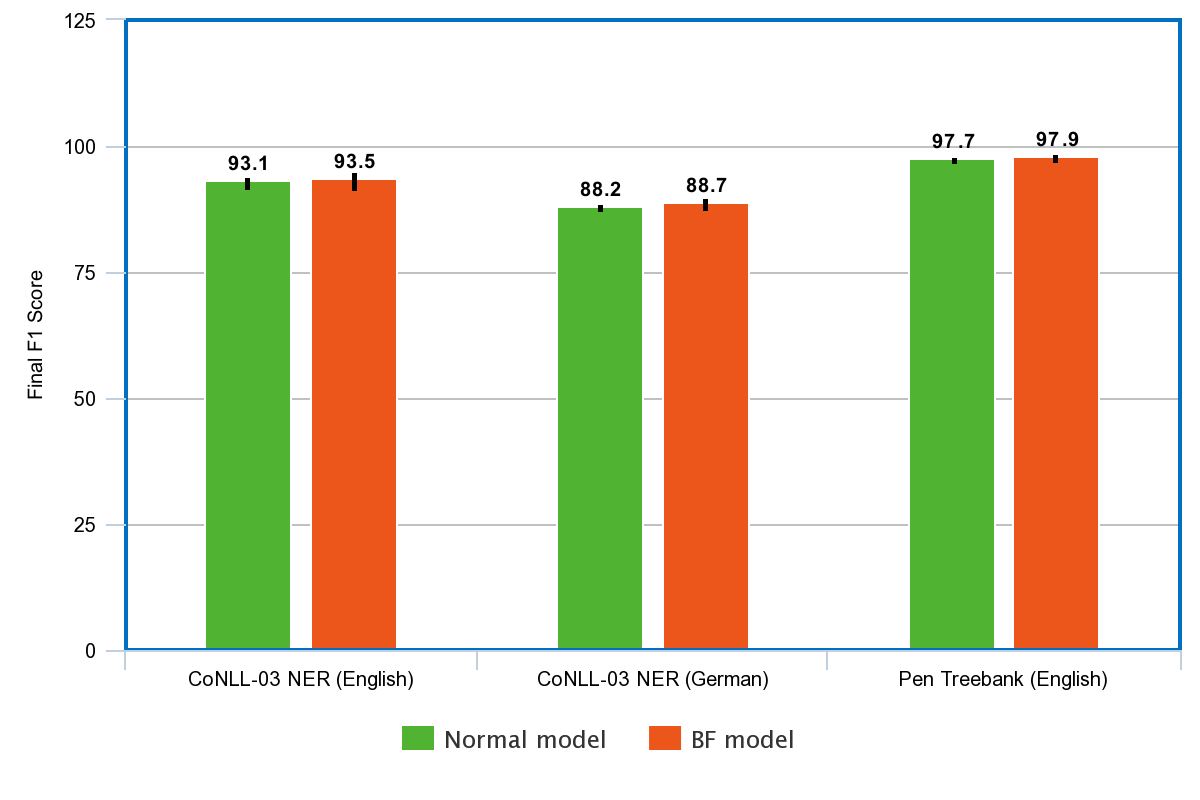}
  \label{figurefirstepochs}
\endminipage\hfill
\minipage{0.45\textwidth}
  \includegraphics[width=\linewidth]{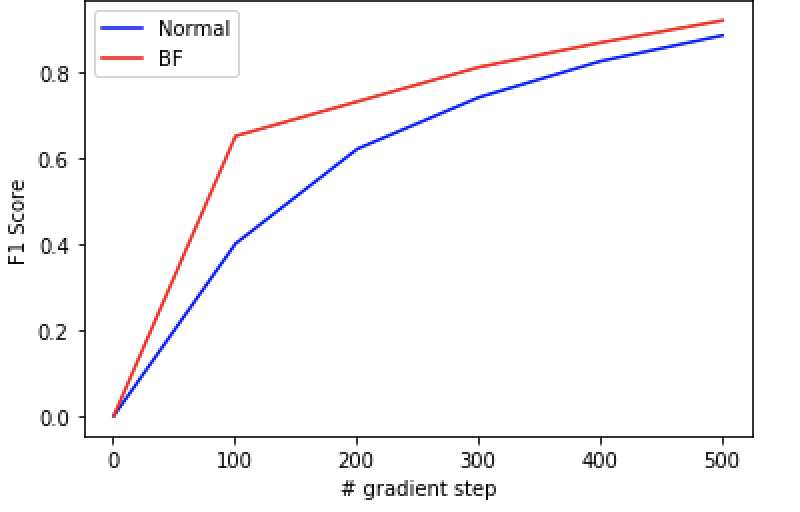}
  \label{figurefirstepochsNLP}
\endminipage\hfill
 \caption{Left: F1 comparison in the first few epochs with different models on CoNLL-03 Named Entity Recognition (English) with the flair's Sequence Tagger, Right: Final F1 Score for different NLP models and data sets. }
 \label{figureNLP}
\end{figure}


\begin{figure}[!htb]
\minipage{0.45\textwidth}
  \includegraphics[width=\linewidth]{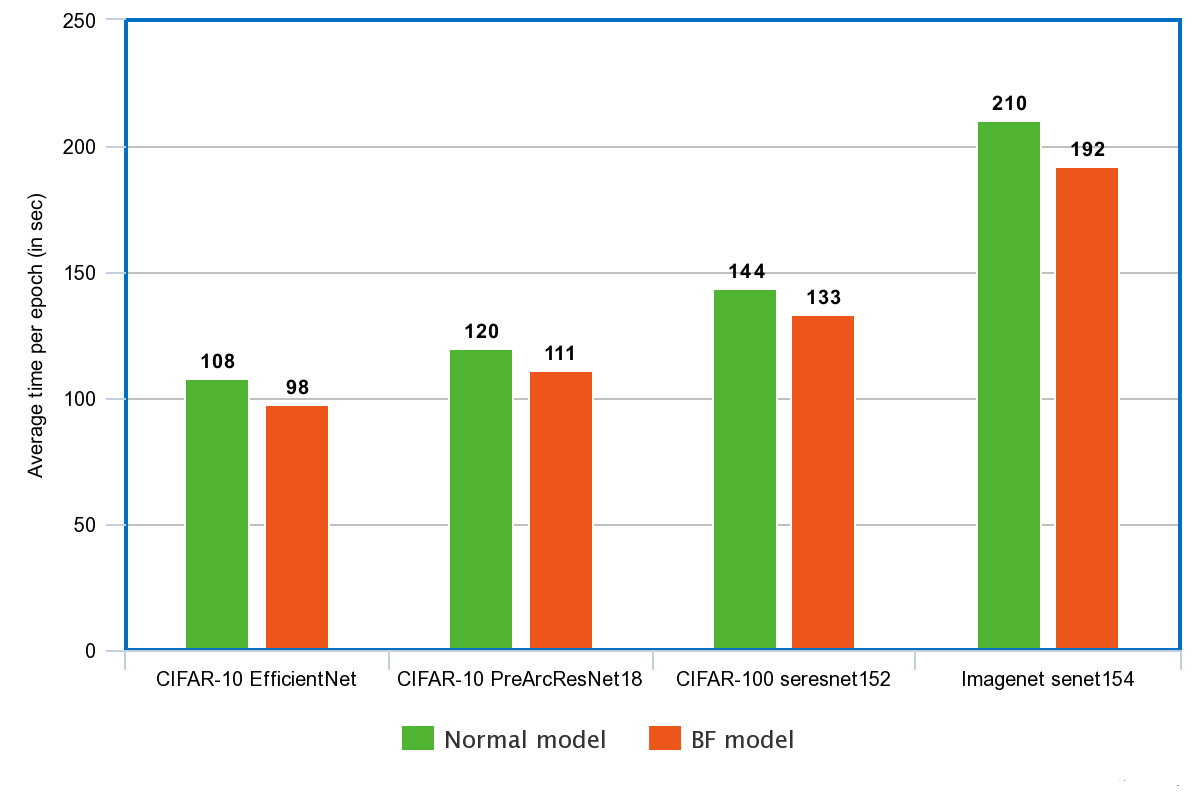}
  \endminipage\hfill
  \minipage{0.45\textwidth}
  \includegraphics[width=\linewidth]{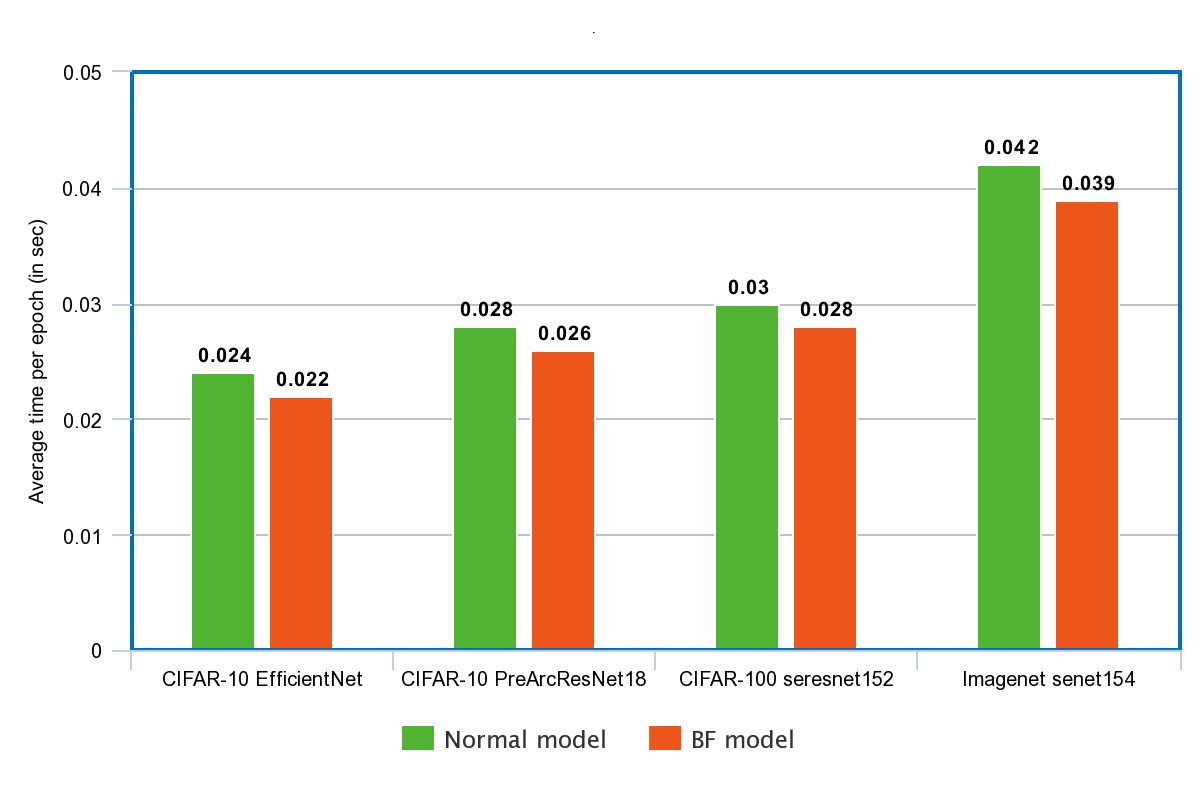}
\endminipage\hfill
\caption{Training/Inference times for Vision Data; Left: Training time, Right: Inference time}
  \label{figuretrainingtime}
\end{figure}

\begin{figure}[!htb]
\minipage{0.45\textwidth}
  \includegraphics[width=\linewidth]{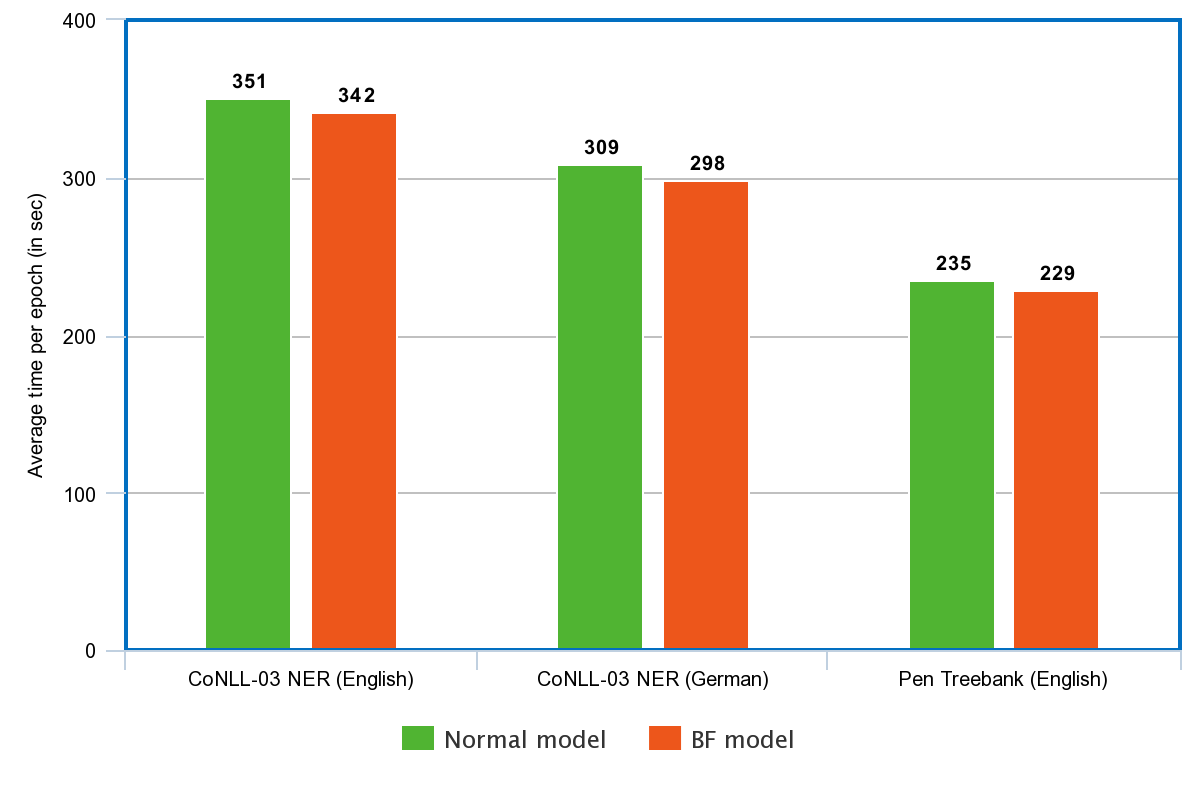}
  \endminipage\hfill
  \minipage{0.45\textwidth}
  \includegraphics[width=\linewidth]{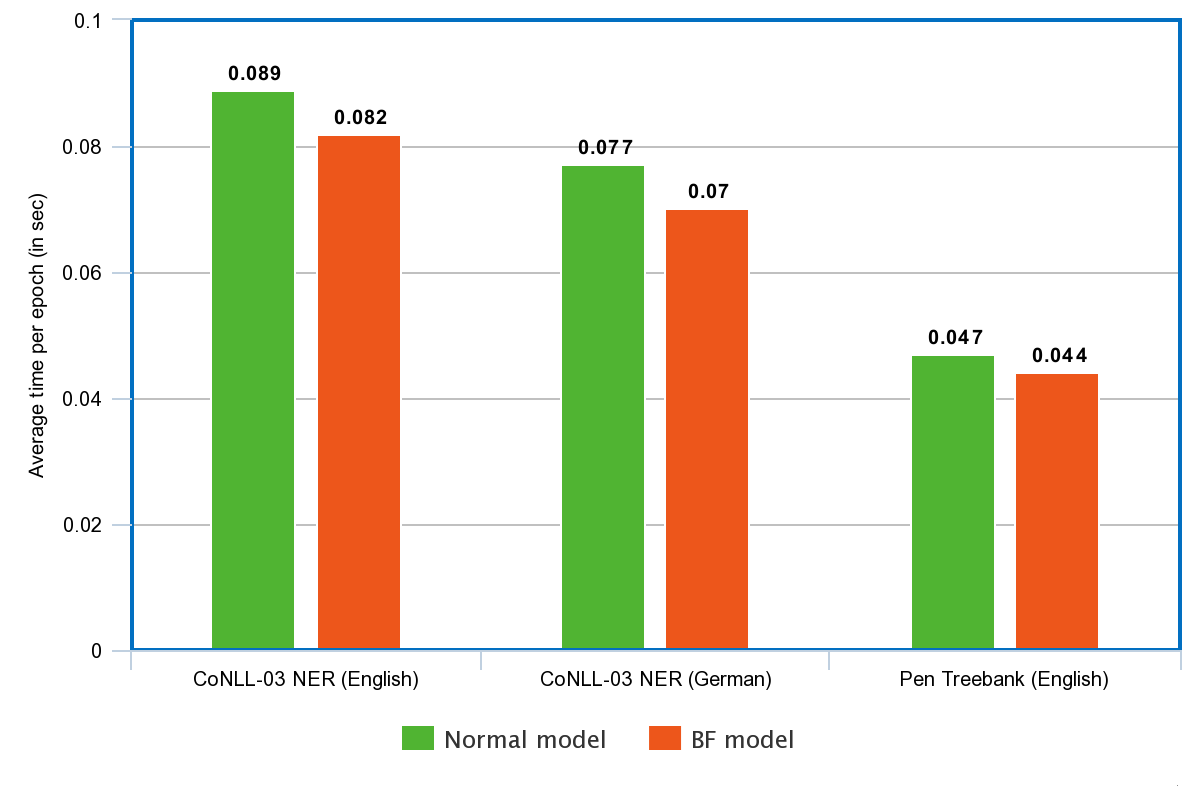}
\endminipage\hfill
\caption{Training/Inference times for NLP; Left: Training time, Right: Inference time}
  \label{figuretrainingtime for nlp}
\end{figure}

\begin{figure}[!htb]
\begin{center} 
\includegraphics[width=0.49\columnwidth]{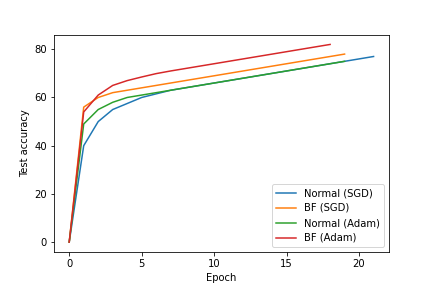}
 \end{center} 
\caption{Comparison of test accuracy in the first 20 epochs with different models and optimizers on CIFAR-10 with PreActResNet18}
  \label{figure20epochs}
\end{figure}

\subsection{Plots from Section \ref{EXP:AC}}\label{subsecappendix: plots from truncated butterfly}

Figure \ref{figapproxappendix} reports the losses for the Gaussian 2, Olivetti, and  Hyper data matrices.
\begin{figure}[!htb]
\minipage{0.32\textwidth}
  \includegraphics[width=\linewidth]{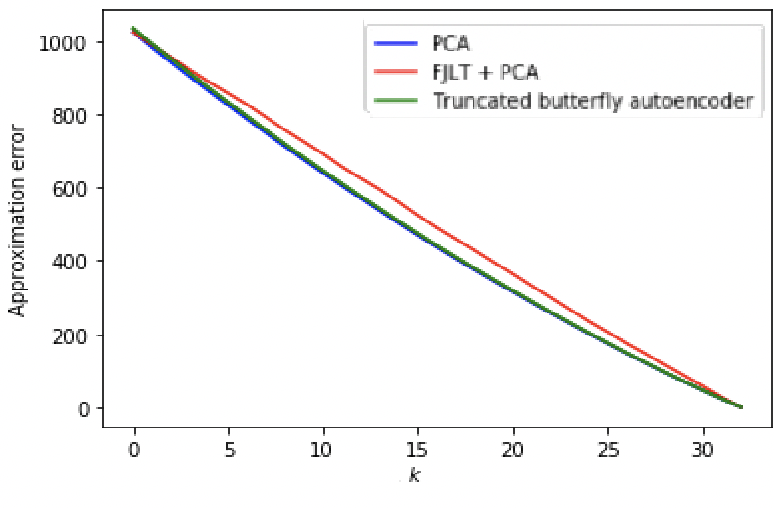}
  \label{rank32}
\endminipage\hfill
\minipage{0.32\textwidth}
  \includegraphics[width=\linewidth]{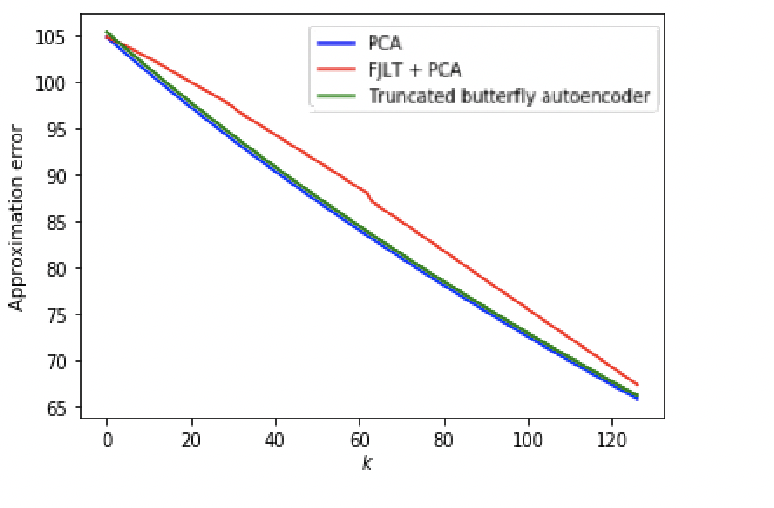}
  \label{faces}
\endminipage\hfill
\minipage{0.32\textwidth}
  \includegraphics[width=\linewidth]{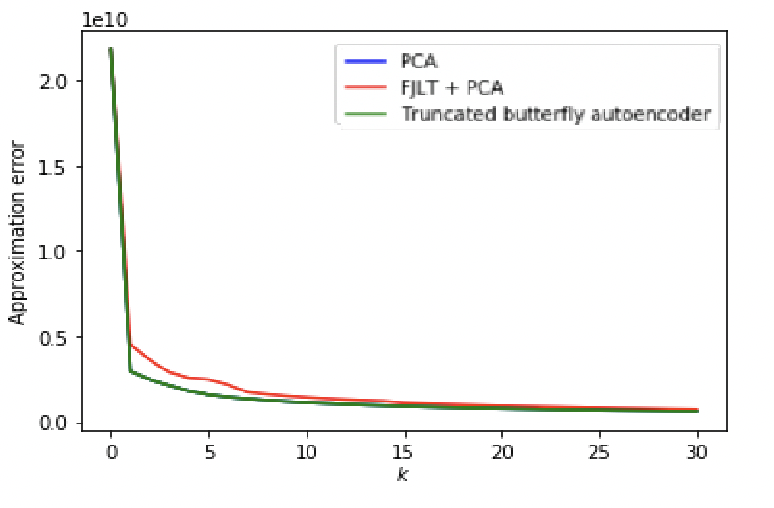}
  \label{hyper}
\endminipage\hfill
\caption{Approximation error on data matrix with various methods for various values of $k$. From left to right: Gaussian 2, Olivetti, Hyper}
\label{figapproxappendix}
\end{figure}
\section{Additional Plots related to Sketching}\label{subsecappendix: plots from supervise learned butterfly}

In this section we state a few additional cases that were done as part of the experiment in Section \ref{subsec: supervise learned butterfly}. Figure \ref{figtesterrl20k1} compares the test errors of the different methods in the extreme case when $k=1$. Figure \ref{figtesterrork10} compares the test errors of the different methods for various values of $\ell$. Figure \ref{figurelossperstep} shows the test error for $\ell=20$ and $k=10$ during the training phase on HS-SOD. Observe that the butterfly learned is able to surpass sparse learned after a merely few iterations. Finally Table \ref{figuretable} compares the test error for different values of $\ell$ and $k$.

\begin{figure}[!htb]
\minipage{0.45\textwidth}
  \includegraphics[width=\linewidth]{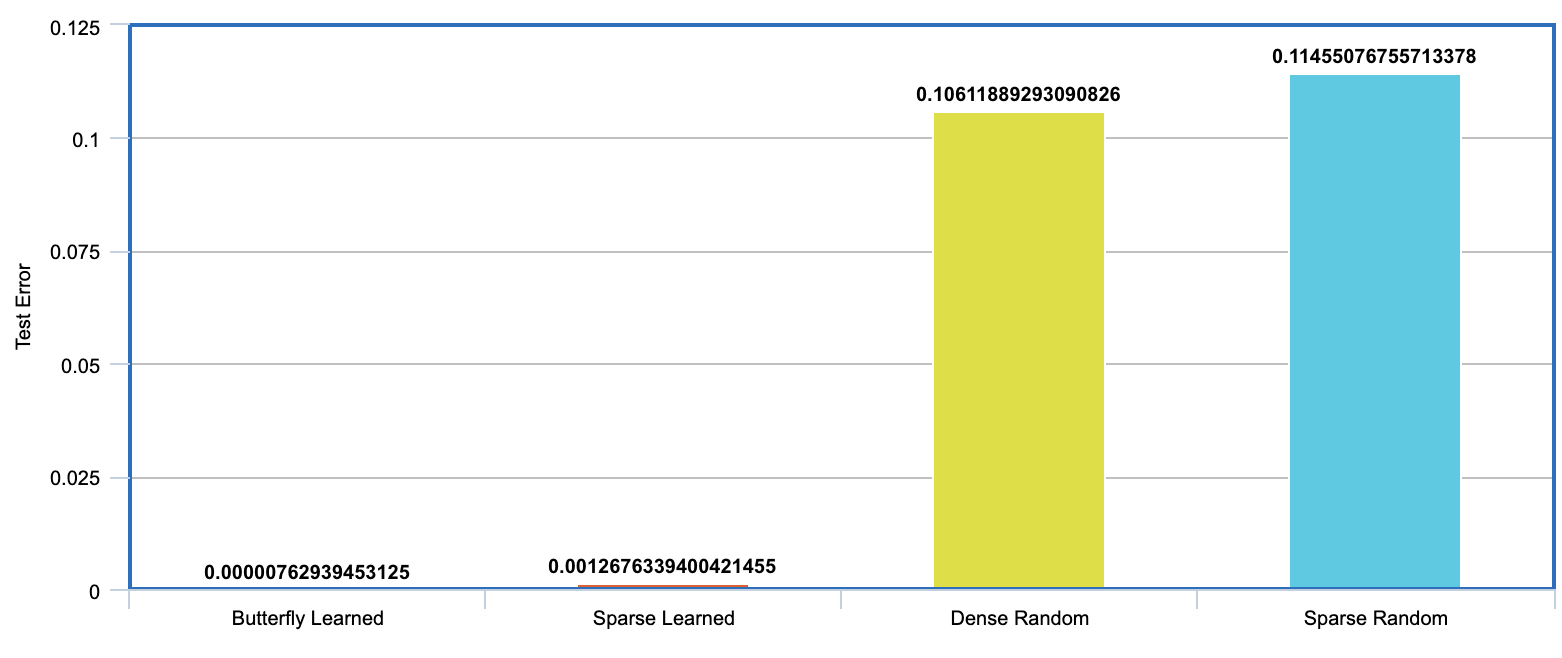}
\endminipage\hfill
\minipage{0.45\textwidth}%
  \includegraphics[width=\linewidth]{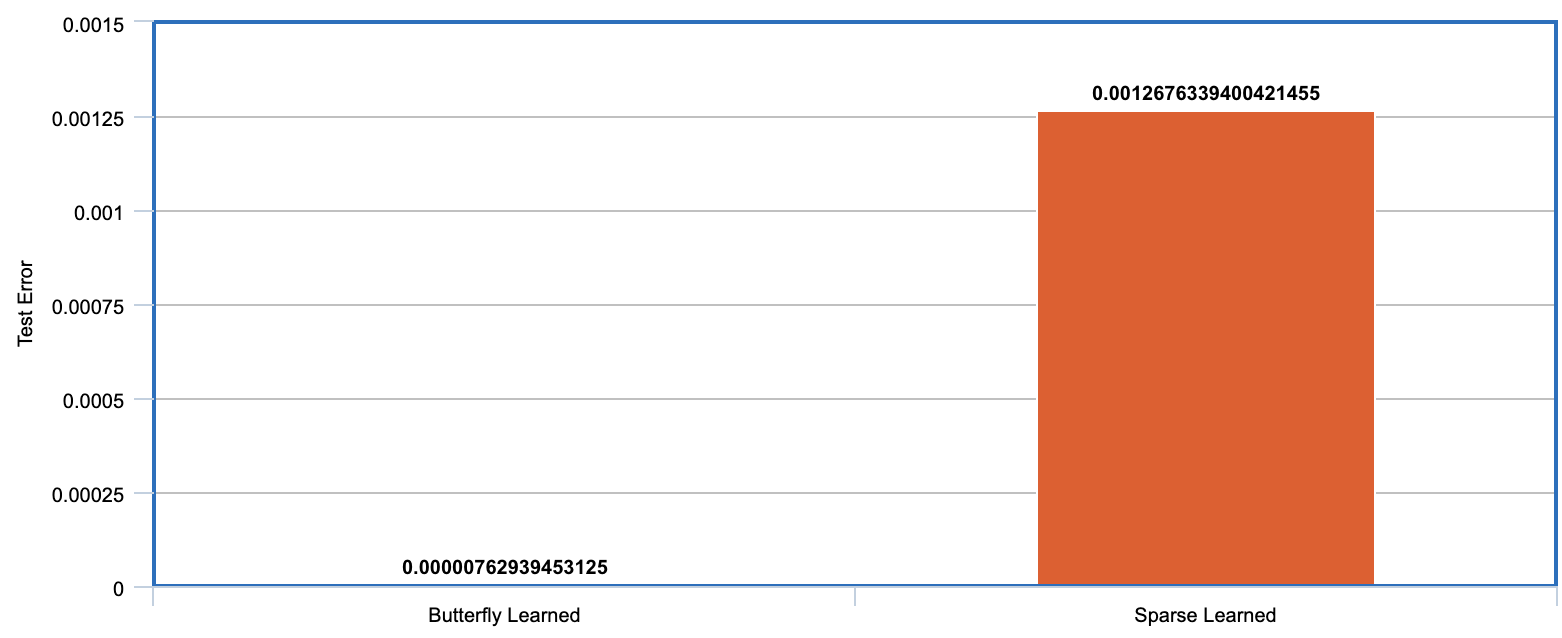}
\endminipage
\caption{Test errors on HS-SOD for $\ell=20$ and $k=1$, zoomed on butterfly and sparse learned in the right}
\label{figtesterrl20k1}
\end{figure}

\begin{figure}[!htb]
\minipage{0.45\textwidth}
  \includegraphics[width=\linewidth]{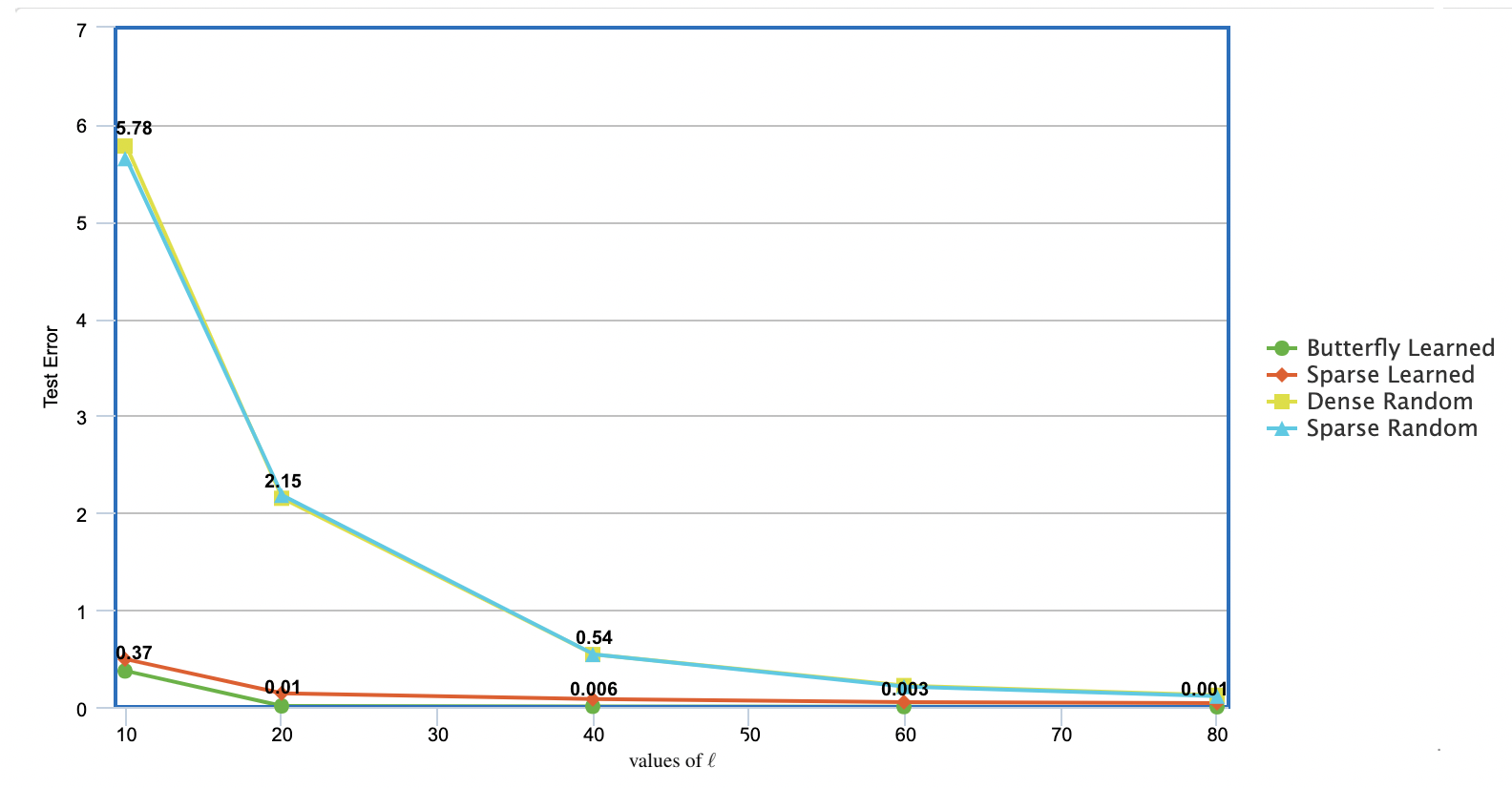}
  \label{figurek10}
\endminipage\hfill
\minipage{0.45\textwidth}%
  \includegraphics[width=\linewidth]{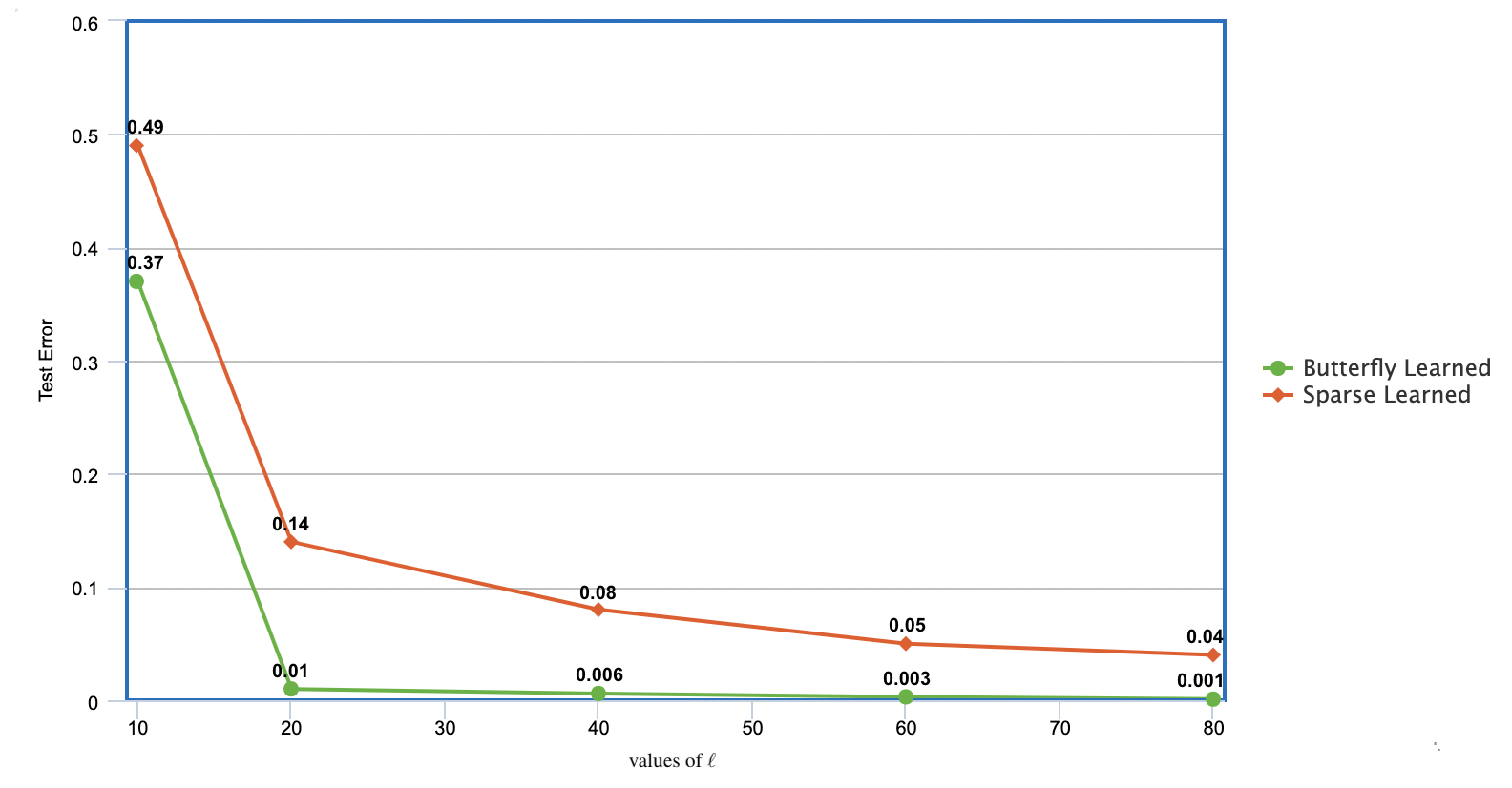}
  \label{figurezoom2}
\endminipage
\caption{Test error when $k=10$, $\ell= [10,20,40,60,80]$ on HS-SOD, zoomed on butterfly and sparse learned in the right}
\label{figtesterrork10}
\end{figure}

\begin{figure}[!htb]
\centering
  \includegraphics[width=0.49\columnwidth]{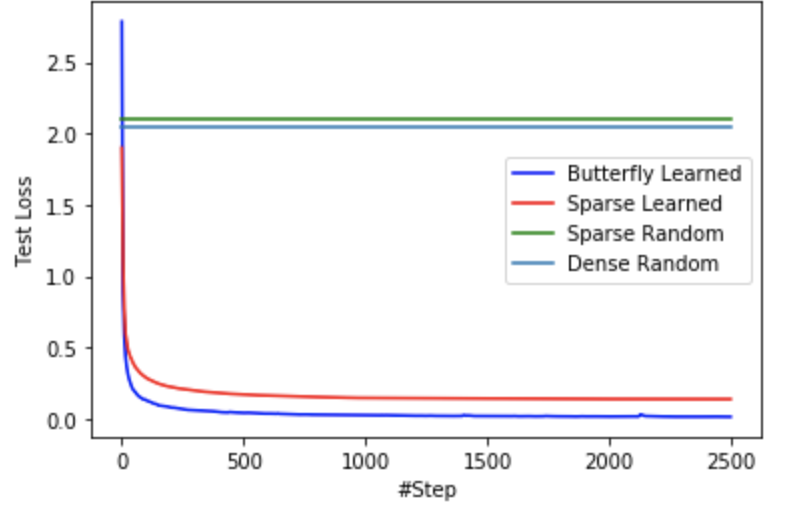}
  \caption{Test error when $k=10$, $\ell=20$ during the training phase on HS-SOD}
  \label{figurelossperstep}
\end{figure}

\begin{table}[!htb]
\centering
\begin{tabular}{|l|l|l|l|}
\hline
\textbf{k, $\ell$, Sketch}                                                                       & \textbf{Hyper}                                                 & \textbf{Cifar-10}                                              & \textbf{Tech}                                                  \\ \hline
\begin{tabular}[c]{@{}l@{}}~1, 5, Butterfly\\ ~1, 5, Sparse\\ ~1, 5, Random\end{tabular}       & \begin{tabular}[c]{@{}l@{}}\textbf{0.0008}\\ 0.003\\ 0.661\end{tabular} & \begin{tabular}[c]{@{}l@{}}\textbf{0.173}\\ 1.121\\ 4.870\end{tabular}  & \begin{tabular}[c]{@{}l@{}}\textbf{0.188}\\ 1.75\\ 3.127\end{tabular}   \\ \hline
\begin{tabular}[c]{@{}l@{}}~1, 10, Butterfly\\ ~1, 10, Sparse\\ ~1, 10, Random\end{tabular}    & \begin{tabular}[c]{@{}l@{}}\textbf{0.0002}\\ 0.002\\ 0.131\end{tabular} & \begin{tabular}[c]{@{}l@{}}\textbf{0.072}\\ 0.671\\ 1.82\end{tabular}   & \begin{tabular}[c]{@{}l@{}}\textbf{0.051}\\ 0.455\\ 1.44\end{tabular}   \\ \hline
\begin{tabular}[c]{@{}l@{}}10, 10, Butterfly\\ 10, 10, Sparse\\ 10, 10, Random\end{tabular} & \begin{tabular}[c]{@{}l@{}}\textbf{0.031}\\ 0.489\\ 5.712\end{tabular}  & \begin{tabular}[c]{@{}l@{}}\textbf{0.751}\\ 6.989\\ 26.133\end{tabular} & \begin{tabular}[c]{@{}l@{}}\textbf{0.619}\\ 7.154\\ 18.805\end{tabular} \\ \hline
\begin{tabular}[c]{@{}l@{}}10, 20, Butterfly\\ 10, 20, Sparse\\ 10, 20, Random\end{tabular} & \begin{tabular}[c]{@{}l@{}}\textbf{0.012}\\ 0.139\\ 2.097\end{tabular}  & \begin{tabular}[c]{@{}l@{}}\textbf{0.470}\\ 3.122\\ 9.216\end{tabular}  & \begin{tabular}[c]{@{}l@{}}\textbf{0.568}\\ 3.134\\ 8.22\end{tabular}   \\ \hline
\begin{tabular}[c]{@{}l@{}}10, 40, Butterfly\\ 10, 40, Sparse\\ 10, 40, Random\end{tabular} & \begin{tabular}[c]{@{}l@{}}\textbf{0.006}\\ 0.081\\ 0.544\end{tabular}  &                                                                & \begin{tabular}[c]{@{}l@{}}\textbf{0.111}\\ 0.991\\ 3.304\end{tabular}  \\ \hline
\begin{tabular}[c]{@{}l@{}}20, 20, Butterfly\\ 20, 20, Sparse\\ 20, 20, Random\end{tabular} & \begin{tabular}[c]{@{}l@{}}\textbf{0.058}\\ 0.229\\ 4.173\end{tabular}  &                                                                & \begin{tabular}[c]{@{}l@{}}\textbf{1.38}\\ 8.14\\ 15.268\end{tabular}   \\ \hline
\begin{tabular}[c]{@{}l@{}}20, 40, Butterfly\\ 20, 40, Sparse\\ 20, 40, Random\end{tabular} & \begin{tabular}[c]{@{}l@{}}\textbf{0.024}\\ 0.247\\ 1.334\end{tabular}  &                                                                & \begin{tabular}[c]{@{}l@{}}\textbf{0.703}\\ 3.441\\ 6.848\end{tabular}  \\ \hline
\begin{tabular}[c]{@{}l@{}}30, 30, Butterfly\\ 30, 30, Sparse\\ 30, 30, Random\end{tabular} & \begin{tabular}[c]{@{}l@{}}\textbf{0.027}\\ 0.749\\ 3.486\end{tabular}  &                                                                & \begin{tabular}[c]{@{}l@{}}\textbf{1.25}\\ 7.519\\ 13.168\end{tabular}  \\ \hline
\begin{tabular}[c]{@{}l@{}}30, 60, Butterfly\\ 30, 60, Sparse\\ 30, 60, Random\end{tabular} & \begin{tabular}[c]{@{}l@{}}\textbf{0.014}\\ 0.331\\ 2.105\end{tabular}  &                                                                & \begin{tabular}[c]{@{}l@{}}\textbf{0.409}\\ 2.993\\ 5.124
\end{tabular}  \\ \hline

\end{tabular}
\caption{Test error for different $\ell$ and $k$}
\label{figuretable}
\end{table}

\section{Bound on Number of Effective Weights in Truncated Butterfly Network}\label{appendix:proof:2nlog2ell}
A butterfly network for dimension $n$, which we assume for simplicity to be an integral power of $2$, is $\log n$ layers deep.  Let $p$ denote the integer $\log n$.  
The set of nodes in the first (input) layer will be denoted here by $V^{(0)}$.  They are connected to the set of $n$ nodes $V^{(1)}$ from the next layer, and so on until the nodes $V^{(p)}$ of the output layer. Between two consecutive layers $V^{(i)}$ and $V^{(i+1)}$, there are $2n$ weights, and each node in $V^{(i)}$ is adjacent to exactly two nodes from $V^{(i+1)}$.

When truncating the network, we discard  all but some set $S^{(p)}\subseteq V^{(p)}$ of at most $\ell$ nodes in the last layer.  These nodes are
connected to a subset $S^{(p-1)}\subseteq V^{(p-1)}$ of at most $2\ell$ nodes from the preceding layer using at most $2\ell$ weights.  By induction, for all $i\geq 0$, the set of nodes $S^{(p-i)}\subseteq V^{(p-i)}$ is of size at most $2^i\cdot\ell$, and is connected to the set
$S^{(p-i-1)}\subseteq V^{(p-i-1)}$ using at most $2^{i+1}\cdot\ell$ weights.

Now take $k=\lceil \log_2 (n/\ell) \rceil$.  By the above, the total number of weights that can  participate in a path connecting some node in $S^{(p)}$ with some node in $V^{(p-k)}$ is at most $2\ell+4\ell+\cdots + 2^k\ell \leq 4n$.  

From the other direction, the total number of weights that can participate in a path connecting any node from $V^{(0)}$ with any node from $V^{(p-k)}$ is $2n$ times the number of layers in between, or more precisely:
$$2n(p-k) = 2n(\log_2 n - \lceil \log_2 (n/\ell)\rceil) \leq 2n(\log_2 n - \log_2 (n/\ell) + 1) = 2n(\log \ell+1)\ .$$

The total is $2n\log \ell + 6n$, as required.
\end{document}